\documentclass[12pt]{article}
\usepackage{graphicx}

\usepackage{wrapfig}
\usepackage{natbib}
\usepackage[colorlinks,citecolor=blue,urlcolor=blue]{hyperref}
\usepackage{booktabs}
\usepackage{pifont}

\usepackage{multirow}
\usepackage{algorithm}
\usepackage{algpseudocode}
\usepackage{amsmath, amssymb, amsthm, amsfonts}
\usepackage{mathtools}
\usepackage{bbm}
\usepackage{authblk}
\usepackage{array}
\usepackage{enumitem}
\usepackage{subcaption}
\usepackage{cleveref}
\usepackage{geometry}
\usepackage{xcolor}

\newcommand*{\eg}{\emph{e.g.}{}}
\newcommand*{\ie}{\emph{i.e.}{}}

\newcommand{\equalcontrib}{\textsuperscript{*}}

\theoremstyle{plain}
\newtheorem{theorem}{Theorem}
\newtheorem{prop}{Proposition}
\newtheorem{assumption}{Assumption}
\newtheorem{corollary}{Corollary}
\newtheorem{lemma}{Lemma}

\theoremstyle{remark}
\newtheorem{remark}{Remark}

\title{\bfseries
Learning Polyhedral Conformal Sets for Contextual Robust Optimization
}

\author{Shuyi Chen\equalcontrib}
\author{Wenbin Zhou\equalcontrib}
\author{Shixiang Zhu}
\affil{Carnegie Mellon University}

\date{}

\begin{document}

\maketitle

\begingroup
\renewcommand\thefootnote{*}
\footnotetext{Equal contribution.}
\endgroup

\begin{abstract}
Robust optimization (RO) provides a principled framework for decision-making under uncertainty, but its performance critically depends on the choice of the uncertainty set. While large sets ensure reliability, they often lead to overly conservative decisions, whereas small sets risk excluding the true outcome. Recent data-driven approaches, particularly conformal prediction, offer finite-sample validity guarantees but remain largely task-agnostic, ignoring the downstream decision structure. In this paper, we propose a decision-aware conformal framework that learns uncertainty sets tailored to robust optimization objectives. Our approach parameterizes a flexible family of polyhedral sets via data-driven hyperplanes and learns their geometry by directly minimizing the induced robust loss, while preserving statistical validity through conformal calibration. To correct for data-dependent selection, we incorporate a recalibration step on an independent dataset to restore coverage. The resulting sets capture directional and anisotropic uncertainty aligned with the decision objective while remaining computationally tractable. We provide finite-sample coverage guarantees and bounds on the suboptimality gap to an oracle decision. This work bridges the gap between statistical validity and decision optimality, providing a principled framework for data-driven robust optimization.
\end{abstract}

\section{Introduction}

Robust optimization (RO) has emerged as a fundamental framework for decision-making under uncertainty~\citep{ben2009robust, ben1998robust}, with widespread applications across operations research, finance, energy systems, supply chain management, and machine learning. At its core, RO prescribes decisions that perform well under worst-case realizations of uncertain parameters, offering a principled way to hedge against ambiguity and model misspecification. This paradigm is particularly valuable in high-stakes settings, where poor decisions can lead to significant economic loss or societal impact. For example, power grid operators must plan infrastructure upgrades under uncertain demand and renewable generation~\citep{6547161,ABDIN2022118032,zhou2024hierarchical}; supply chain managers must allocate inventory without knowing future disruptions~\citep{e175628f-da0b-3bc1-9746-ae30840a1bc5}; and healthcare systems must make treatment decisions under uncertain patient outcomes~\citep{mo2021learning}. In such settings, robustness is not merely desirable—it is essential.

A central component of robust optimization is the specification of the uncertainty set, which characterizes plausible realizations of the unknown parameters. The quality of the resulting decision critically depends on this choice. A well-known challenge is the trade-off between \emph{conservativeness and performance}: overly large uncertainty sets provide strong protection against worst-case scenarios but often lead to overly pessimistic decisions, while overly small sets may exclude the true outcome, resulting in unreliable and potentially catastrophic decisions.

While the literature has extensively studied how to ensure robustness—often by constructing uncertainty sets that are sufficiently large to guarantee feasibility or coverage—it has paid comparatively less attention to the risk of over-conservativeness~\citep{zhou2025robustness, wang2025mean}. In practice, this imbalance can have significant consequences. For instance, in power systems planning, an overly conservative uncertainty set may trigger unnecessary infrastructure investments, leading to excessive costs for utilities and consumers~\citep{chen2026large}. In inventory management, it may result in systematic overstocking, tying up capital and increasing waste \citep{lim2017inventory}. Similarly, in portfolio optimization, overly conservative risk sets can lead to excessively defensive allocations that sacrifice substantial returns \citep{gregory2011robust}. These examples highlight the importance of striking the right balance: uncertainty sets should be large enough to ensure reliability, yet sufficiently tight and structured to avoid overly pessimistic decisions.

Recent advances have sought to address the construction of uncertainty sets through data-driven approaches, among which conformal prediction has gained particular attention~\citep{yeh2024end, patel2024conformal}. Conformal methods provide distribution-free, finite-sample guarantees for prediction sets under mild assumptions~\citep{vovk2005algorithmic}, making them an appealing tool for robust optimization. By using conformal prediction sets as uncertainty sets, one can ensure that the true outcome is contained within the set with high probability, thereby offering a principled way to control risk. However, standard conformal prediction is inherently \emph{task-agnostic}: it constructs sets based solely on predictive accuracy, without accounting for the downstream decision problem~\citep{pmlr-v206-wang23n}. As a result, conformal uncertainty sets may be unnecessarily large or poorly shaped from the perspective of decision-making, leading to suboptimal robust decisions. Several recent works have attempted to improve the efficiency of conformal sets by reducing their size or adapting them to the data distribution~\citep{zheng2024generative}. While these methods yield tighter prediction sets, they remain largely disconnected from the structure of the downstream optimization problem. Consequently, there remains a fundamental gap between \emph{statistical validity} and \emph{decision optimality}: existing approaches either guarantee coverage without considering decision performance, or improve efficiency without explicitly optimizing for the decision task.

\begin{figure}[t]
    \centering
    \includegraphics[width=0.8\linewidth]{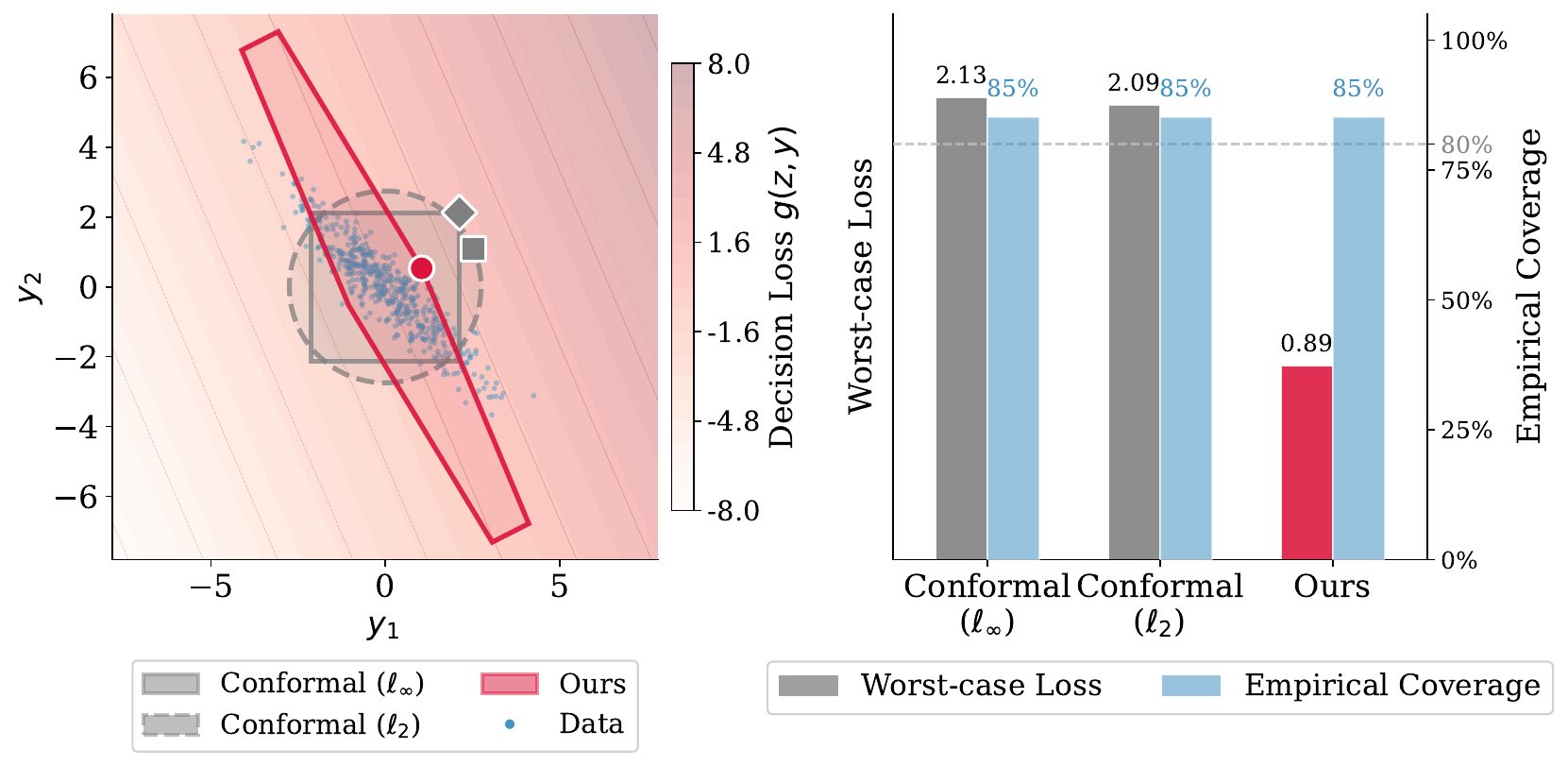}
\caption{Illustration of decision-aware conformal prediction sets. Standard conformal sets (grey) and our decision-aware set (red), overlaid with the loss $g(z,y) = z^\top y$. All three sets achieve the same empirical coverage (85\%), but our set is oriented to avoid high-loss directions, reducing worst-case decision loss. }
    \label{fig:motivating-exp}

\end{figure}

In this paper, we bridge this gap by proposing a decision-aware conformal prediction framework for robust optimization. The central idea is to design uncertainty sets whose geometry is explicitly aligned with the structure of the downstream decision problem, rather than treating them as task-agnostic byproducts of prediction (Fig.~\ref{fig:motivating-exp}). To this end, we introduce a flexible, learnable nonconformity score that induces \emph{polyhedral uncertainty sets} via data-driven hyperplanes. This construction enables the uncertainty set to capture \emph{directional and anisotropic uncertainty}, prioritizing directions that are most relevant to the decision objective. At the same time, the polyhedral structure preserves computational tractability, allowing the resulting robust optimization problem to admit efficient reformulations. Building on this formulation, our method jointly learns the geometry of the uncertainty set by minimizing the induced robust loss, while maintaining statistical validity through conformal calibration. Since the uncertainty set is learned from data, we incorporate a recalibration step on an independent dataset to correct for selection bias and restore finite-sample coverage guarantees. Our framework provides both statistical and decision-theoretic guarantees. We establish finite-sample coverage for the final, data-dependent uncertainty set, addressing the post-selection bias. We further derive finite-sample bounds on the suboptimality gap relative to an oracle decision, with a decomposition into calibration and learning errors. This result quantifies the effect of finite samples and provides guidance on allocating data between learning and recalibration.

\paragraph{Related Work}

Adapting conformal prediction (CP) methods to construct valid uncertainty sets in robust optimization (RO) has been a key focus of recent literature development in CP. Some works propose using different shapes, such as using ellipsoids~\citep{johnstone2021conformal} and multiple generated balls~\citep{patel2024conformal}, to accurately carve the shape of the data and prevent overconservatism. Other works find applications of CP in various real-world RO problems, such as power system resilience~\citep{chen2025enhancing, chen2026large} and autonomous driving~\citep{lekeufack2024conformal}. Another line of literature studies the properties of CP methods in different types of RO problems, such as linear-quadratic regulator systems~\citep{patel2025conformal}, human decision making~\citep{hullman2025conformal}, inverse optimization problems~\citep{lin2024conformal}, and decision-theoretic perspectives~\citep{kiyani2025decision, chenreddy2024end} to derive optimality conditions for CP methods under RO settings and gain implementation insights. Our work is closely related to this line of research, as we also aim to tailor conformal prediction to better serve robust optimization. However, we differ in our approach: rather than deriving theoretical characterizations of the uncertainty set, we adopt a learning-based framework to estimate its optimal shape.

Tailoring conformal prediction (CP) sets to be more efficient (\ie, tighter) is another important direction of CP research. Previous work has established the optimality of constructing sets in high-density regions (HDR) of the underlying target distribution~\citep{lei2014distribution,izbicki2022cd}, which motivates a series of studies studying the use of generative models as prediction models to generate multiple fine-grained sets whose union approximates the contour of the HDR~\citep{pmlr-v206-wang23n, zheng2024generative}. On the other hand, multidimensional CP focuses on developing more expressive nonconformity scores that allow the prediction set to take flexible shapes, including ellipsoidal sets~\citep{xu2024conformal}, general convex sets~\citep{tumu2024multi}, and nonconvex sets~\citep{fang2025contra}. More recently, several directions of work have explored how conformal prediction set shapes can be explicitly optimized. This can be formulated as a model selection problem~\citep{liang2024conformal, yang2025selection}, where the task is to select the best performing model from a finite collection of conformal prediction sets. Another line of work formulates this as an end-to-end learning task whose objective is to minimize set size subject to the validity chance-constraint, and can be solved using, \eg, empirical risk minimization~\citep{bai2022efficient, braun2025minimum} and duality~\citep{kiyani2024length}. Our work is most closely related to this line of research, as we also formulate a learning problem. However, rather than optimizing an efficiency-oriented objective as in prior works, we consider a general decision-making objective that subsumes efficiency as a special case.

Our work is closely related to decision-focused learning (DFL)~\citep{wilder2019melding, mandi2024decision}, a class of approaches that train predictive models by optimizing decision-oriented objectives through differentiable optimization layers~\citep{amos2017optnet, agrawal2019differentiable}, and has been adapted to settings such as generative modeling~\citep{wang2025gen} and sequential modeling~\citep{11297004}. The idea of DFL can be extended to learn conformal prediction models tailored for robust optimization, in a way similar to efficiency-aware conformal prediction~\citep{stutz2022learning}. However, this field remains underexplored. For example,~\citet{cortes2024utility} propose decision-aware nonconformity scores that incorporate downstream utility into conformal prediction, yielding sets aligned with decision objectives while preserving coverage guarantees. \citet{yeh2024end} consider a more general framework by parameterizing convex uncertainty sets via neural scores and training them end-to-end using differentiable optimization layers. \citet{bao2025optimal} instead study a model selection approach, selecting from a discrete set of conformal predictors to minimize downstream decision risk. Our work shares the same motivation as tailoring conformal prediction to downstream decision-making while maintaining validity. However, rather than designing specific scores, relying on end-to-end training, or selecting from a finite model class, we adopt a learning-based approach that directly optimizes the shape of the polyhedral conformal prediction set for a general decision objective to achieve optimality.

Our work is also related to chance-constrained optimization (CCO), which addresses uncertainty by requiring constraints to hold with a prespecified probability, thereby explicitly controlling the risk of constraint violation \citep{charnes1959chance, miller1965chance, jiang2016data}. Since directly evaluating chance constraints is often computationally difficult, classical data-driven approaches replace them with sampled approximations, including scenario programs that enforce all sampled constraints~\citep{calafiore2006scenario} and sample-average approximations (SAA) that permit a controlled fraction of violations~\citep{luedtke2008sample}. For finite-support problems, such empirical chance constraints can also be encoded using mixed-integer formulations~\citep{luedtke2010integer}. \citet{geng2019data} provide a comprehensive review of these methods and their applications to decision-making in power systems. More recently, \citet{zhao2024conformal} introduce conformal predictive programming, which combines an empirical-quantile reformulation with independent conformal calibration to provide finite-sample a posteriori feasibility guarantees. Our formulation shares a similar setting where we utilize conformal prediction within the constraint of the optimization problem, but our core objective differs: rather than seeking a decision that satisfies a conformalized chance constraint, we aim to learn the context-dependent geometry of a polyhedral prediction set that minimizes downstream robust decision loss while preserving finite-sample coverage validity.

\section{Problem Setup and Preliminaries}

In many high-stakes decision-making problems, the true outcome $y \in \mathcal{Y} \subseteq \mathbb{R}^d$ is unknown at the time a decision $z \in \mathcal{Z} \subseteq \mathbb{R}^p$ must be made, where $\mathcal{Z}$ denotes the feasible decision region. A standard approach is to first construct an uncertainty (or prediction) set $\mathcal{C}(x) \subseteq \mathcal{Y}$ based on observed features $x \in \mathcal{X}$, and then select a decision that is robust against all realizations within this set~\citep{johnstone2021conformal}. Formally, given a loss function $g(z,u)$ that evaluates the performance of decision $z$ under a potential outcome $u\in \mathcal{Y}$ in context $x$, the decision is obtained by solving the contextual robust optimization problem
\[
    \min_{z \in \mathcal{Z}} ~ \max_{u \in \mathcal{C}(x)} ~ g(z, u),
\]
where the inner maximization captures adversarial uncertainty over the set $\mathcal{C}(x)$, and the outer minimization selects the decision that minimizes the worst-case loss.

A key challenge in this framework lies in constructing the uncertainty set $\mathcal{C}(x)$. In particular, there is an inherent \emph{trade-off between robustness and informativeness}: overly large sets guarantee coverage but often lead to overly conservative decisions with high loss, whereas overly small sets may fail to contain the true outcome, resulting in unreliable or even infeasible decisions~\citep{zhou2025robustness}.

To address this challenge, we seek to construct uncertainty sets that are both \emph{statistically valid} and \emph{decision-aware}. Specifically, we consider a parameterized family of set-valued mappings $\{\mathcal{C}(\cdot;\theta): \mathcal{X} \to 2^{\mathcal{Y}}\}$ indexed by $\theta \in \Theta$. Let $(X,Y) \sim \mathcal{P}$ denote the joint distribution of features and outcomes. Our goal is to learn a parameter $\theta$ such that: ($i$) the resulting uncertainty set achieves a desired coverage level, \ie, $\mathbb{P}\bigl\{Y \in \mathcal{C}(X;\theta)\bigr\} \ge 1 - \alpha$, and ($ii$) the induced robust decision incurs low worst-case loss.

This leads to the following constrained learning problem:
\begin{align}
    \min_{\theta \in \Theta} \quad & \ell(\theta) \coloneqq \min_{z \in \mathcal{Z}} ~ \max_{u \in \mathcal{C}(x;\theta)} ~ g(z, u) \label{eq:main_obj} \\
    \text{s.t.} \quad
    & \mathbb{P}\bigl\{ Y \in \mathcal{C}(X;\theta) \bigr\} \ge 1 - \alpha. \label{eq:coverage}
\end{align}
For a fixed context $x$ and parameter $\theta$, the inner min--max problem is a deterministic robust optimization problem that defines the decision induced by the uncertainty set $\mathcal{C}(x;\theta)$. The outer optimization over $\theta$ then selects, among all sets satisfying the coverage constraint~\eqref{eq:coverage}, the one that yields the best downstream decision performance. Randomness enters only through the data-generating distribution $\mathcal{P}$ used to construct and evaluate $\mathcal{C}(\cdot;\theta)$, and the decision problem is fully deterministic. Thus,~\eqref{eq:main_obj} can be viewed as a decision-focused formulation that chooses the geometry of the uncertainty set to directly optimize decision quality.

\subsection{Preliminaries: Conformal Prediction}

Conformal prediction provides a general, model-agnostic framework for constructing prediction sets with finite-sample, distribution-free coverage guarantees \citep{vovk2005algorithmic, shafer2008tutorial, angelopoulos2023conformal}. Suppose the data $\{(X_i, Y_i)\}_{i=1}^{n+1}$ are \emph{exchangeable}. Given calibration data $\mathcal{D}_n \coloneqq \{(X_i, Y_i)\}_{i=1}^n$, the goal is to construct a set-valued predictor $\mathcal{C}(x) \subseteq \mathcal{Y}$ such that, for a new test point $(X_{n+1}, Y_{n+1})$, the coverage guarantee
\begin{equation}
    \mathbb{P}\bigl\{ Y_{n+1} \in \mathcal{C}(X_{n+1};\mathcal{D}_n) \bigr\} \ge 1 - \alpha
    \label{eq:cf-coverage}
\end{equation}
holds for a prescribed miscoverage level $\alpha \in (0,1)$.
\begin{assumption}[Exchangeability]
The augmented sample $\{(X_i, Y_i)\}_{i=1}^{n+1}$ is exchangeable, \ie, for any permutation $\pi$ of $\{1,\dots,n+1\}$,
\[
    (X_1, Y_1), \dots, (X_{n+1}, Y_{n+1})
    \overset{d}{=}
    (X_{\pi(1)}, Y_{\pi(1)}), \dots, (X_{\pi(n+1)}, Y_{\pi(n+1)}).
\]
\end{assumption}

A common implementation is split conformal prediction~\citep{vovk2005algorithmic, lei2018distribution}. The data are partitioned into a training set and a calibration set. A predictive model $\hat{f}:\mathcal{X}\to\mathcal{Y}$ is first trained on the training set, and a nonconformity score function $s(x,y)$ is defined to measure how well a candidate outcome $y$ conforms to the prediction at $\hat{f}(x)$. For example, in regression, one may take $s(x,y) = |y - \hat{f}(x)|$. Given the calibration data $\mathcal{D}_n$,  the empirical quantile of the nonconformity scores is defined as
\begin{equation}
    R(\mathcal{D}_n) = \inf \left\{ r : \frac{1}{n} \sum_{i=1}^n \mathbbm{1}\bigl\{ s(X_i, Y_i) \le r \bigr\} \ge \frac{\left\lceil (n+1)(1-\alpha) \right\rceil}{n} \right\}.
    \label{eq:quantile}
\end{equation}
The prediction set for a new input $x$ is then constructed as
\[
    \mathcal{C}(x; \mathcal{D}_n) = \{ u \in \mathcal{Y} : s(x,u) \le R(\mathcal{D}_n) \}.
\]
By construction, the set $\mathcal{C}$ satisfies the marginal coverage guarantee in~\eqref{eq:cf-coverage} under the exchangeability assumption, regardless of the choice of the underlying predictive model $\hat{f}$.

\section{Decision-Aware Conformal Calibration}

We propose a decision-aware conformal framework for constructing uncertainty sets tailored to downstream optimization tasks. See Fig.~\ref{fig:theory} for an overview. Our approach parameterizes a flexible family of polyhedral sets via data-driven hyperplanes and learns their geometry by directly minimizing the induced robust decision loss. Statistical validity is ensured through conformal calibration, which provides finite-sample coverage guarantees for any fixed parameter, and is restored after model selection via a post-hoc recalibration step on an independent dataset.

\begin{figure}[!t]
    \centering
    \begin{subfigure}[t]{0.42\linewidth}
        \centering
        \includegraphics[width=\linewidth]{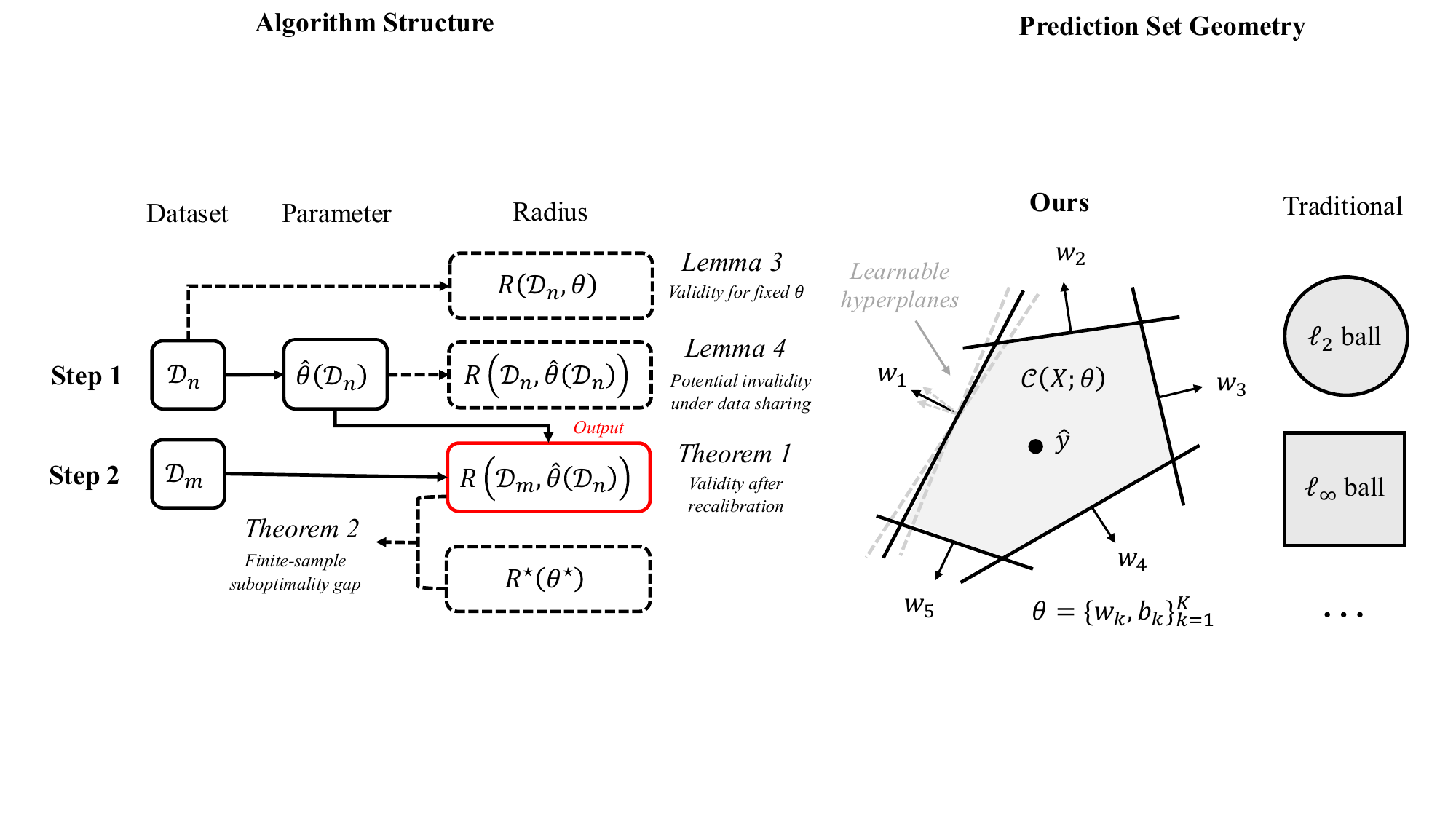}
\caption{The proposed learnable conformal set}
        \label{fig:theory-left}
    \end{subfigure}
    \hspace{0.02\linewidth}
    \begin{subfigure}[t]{0.54\linewidth}
        \centering
        \includegraphics[width=\linewidth]{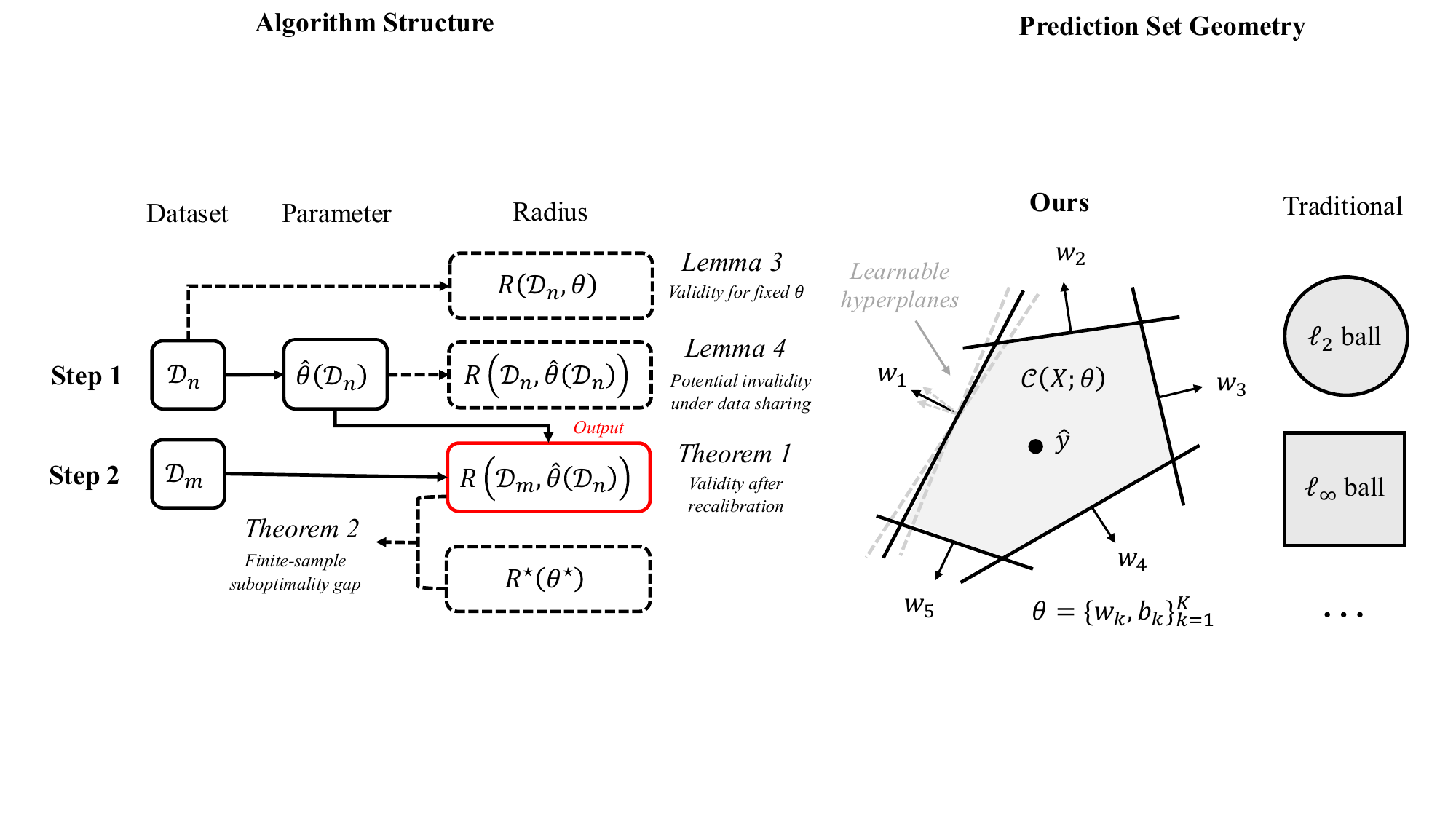}
\caption{Overview of the proposed algorithm}
        \label{fig:theory-right}
    \end{subfigure}
\caption{ Overview of decision-aware conformal set learning. }
    \label{fig:theory}
\end{figure}

\subsection{Learnable Polyhedral Conformal Set}

Unlike classical uncertainty sets with fixed and isotropic geometry (\eg, $\ell_2$ balls)~\citep{ben1998robust, ben1999robust, shafer2008tutorial}, our core idea is to construct a parameterized family of polyhedral sets defined as intersections of data-driven halfspaces (supporting hyperplanes), allowing the geometry to adapt to the downstream decision objective (Fig.~\ref{fig:theory-left}). We require statistical validity to hold uniformly over all hyperplane configurations, and then optimize the parameters within this valid family to minimize downstream decision loss.

Let $\epsilon(x,y) \coloneqq y - \hat{f}(x)$ denote the residual, and define its augmented form $\tilde \epsilon(x,y) \coloneqq [\epsilon(x,y)^\top, 1]^\top \in \mathbb{R}^{d+1}$. We parameterize the nonconformity score as
\begin{equation}
    s(x,y; \theta) \coloneqq \max_{k\in[K]} \left\{\theta_k^\top\tilde \epsilon(x,y)\right\},
    \label{eq:non_conformity}
\end{equation}
where $\theta = \{\theta_k \coloneqq [w_k^\top, b_k]^\top \in \mathbb{R}^{d+1}\}_{k=1}^K$. This score is a max-affine (polyhedral convex) function, which can be interpreted as a polyhedral gauge on the residual space, whose sublevel sets define polyhedral regions shaped by the hyperplanes $\{\theta_k\}_{k=1}^K$; see, \eg,~\citep{niculescu2006convex}. With symmetric choices of $\{\theta_k\}$, it recovers classical polyhedral norms, for example, $\{\pm e_j\}_{j=1}^d$ yields the $\ell_\infty$ norm.

Given $s(x, y;\theta)$, we define a parameter-dependent radius via split conformal calibration on the calibration dataset $\mathcal{D}_n$:
\begin{equation}
\label{eq:radius}
R(\mathcal{D}_n; \theta)
=
\inf \left\{ r \in \mathbb{R}:
\frac{1}{n} \sum_{i=1}^n
\mathbbm{1}\{ s(X_i,Y_i;\theta) \le r \}
\ge
\frac{\left\lceil (n+1)(1-\alpha) \right\rceil}{n}
\right\}.
\end{equation}
For any fixed $\theta$ and a new input $x$, the resulting conformal prediction set
\begin{equation}
    \mathcal{C}(x;\mathcal{D}_n, \theta)
    \coloneqq
    \left\{
        u\in\mathcal{Y}:
        s(x,u;\theta)\le R(\mathcal{D}_n;\theta)
    \right\}
    \label{eq:param-cf-set}
\end{equation}
satisfies the coverage guarantee in~\eqref{eq:cf-coverage}. To ensure that $\mathcal{C}(\cdot;\mathcal{D}_n, \theta)$ is a bounded polyhedron (polytope) for all $x$, we require $K \ge d+1$ and that the collection of normal vectors $\{w_k\}_{k=1}^K \subset \mathbb{R}^d$---corresponding to the first $d$ components of $\theta_k$---positively spans $\mathbb{R}^d$.

\subsection{Proposed Algorithm}
\label{sec:algo}

We now present a two-stage procedure that integrates decision-aware learning of the proposed conformal set with a post-hoc recalibration step that ensures valid coverage (Fig.~\ref{fig:theory-right}).

\paragraph{Step 1: Decision-Aware Conformal Set Learning}

Given the parameterized conformal set $\mathcal{C}(\cdot;\mathcal{D}_n, \theta)$ in \eqref{eq:param-cf-set}, we learn the parameter $\theta$ by directly minimizing the downstream decision loss induced by the corresponding robust optimization problem:
\begin{align}
    \mathcal{L}_x(\theta,r) \coloneqq \min_{z \in \mathcal{Z}} \quad &  \max_{u \in \mathcal{Y}} ~ g(z,u)
    \label{eq:regret-new}\\
    \text{s.t.} \quad
    & s(x,u; \theta) \le r.
    \label{eq:robust-new}
\end{align}
We aim to minimize its empirical average over the learning dataset $\mathcal{D}_n$,
$$
\hat{\theta}(\mathcal{D}_n)
\coloneqq
\arg\min_{\theta\in\Theta}
\frac{1}{n}\sum_{i=1}^n
\mathcal{L}_{X_i}\!\left(\theta,R(\mathcal{D}_n;\theta)\right).
$$
Here, we explicitly denote the dependence of $\hat{\theta}$ on the learning data $\mathcal{D}_n$. In the following sections, we suppress the subscript $x$ from $\mathcal{L}_x$ when clear from context.

Compared to the original formulation~\eqref{eq:main_obj}--\eqref{eq:coverage}, the coverage constraint is enforced implicitly through conformal calibration via $R(\mathcal{D}_n;\theta)$. This allows the probabilistic constraint to be replaced by the deterministic constraint~\eqref{eq:robust-new}, thereby decoupling statistical validity from the optimization procedure. Moreover, the polyhedral structure induced by $s(\cdot, \cdot;\theta)$ implies that the feasible set in~\eqref{eq:robust-new} can be represented as a finite collection of linear inequalities.

\begin{remark}[Connection with sample average approximation]
The radius condition $r\ge R(\mathcal{D}_n;\theta)$ is equivalent to
$$\frac{1}{n} \sum_{i=1}^n
\mathbbm{1}\{ s(X_i,Y_i;\theta) \le r \}
\ge
 1-\omega_n
, \quad
\text{where }
\omega_n = 1-\frac{\left\lceil (n+1)(1-\alpha) \right\rceil}{n}.
$$
\end{remark}
Therefore, the radius condition can be viewed as an $n$-dependent sample average approximation (SAA) constraint with the conformal choice of $\omega_n$~\citep{luedtke2008sample, zhao2024conformal}.

\paragraph{Step 2: Recalibration for Robust Decision-Making}

The conformal calibration in~\eqref{eq:radius} guarantees valid coverage for any \emph{fixed} parameter $\theta$. However, in Step~1, $\hat\theta$ is learned from the data $\mathcal{D}_n$, which introduces a data-dependent selection effect~\citep{hegazy2025validselectionconformalsets, liang2024conformal} and violates the exchangeability condition required for conformal validity.

To restore validity, we perform a recalibration step using an independent dataset. Specifically, let $\mathcal{D}_m \coloneqq \{(X_j, Y_j)\}_{j=1}^m$ denote an additional set of exchangeable samples drawn from the same distribution. Given the learned parameter $\hat{\theta}(\mathcal{D}_n)$, we compute the conformal radius on this fresh calibration set, yielding $R(\mathcal{D}_m; \hat{\theta})$. The resulting conformal set $\mathcal{C}(x; \mathcal{D}_m, \hat{\theta})$ is then used to prescribe the final decision
\begin{equation}
\label{eq:final-decision}
\hat{z}(x)
=
\arg \min_{z \in \mathcal{Z}} ~ \max_{u}~g(z,u),\quad\text{s.t.}~u \in \mathcal{C}(x; \mathcal{D}_m, \hat{\theta}(\mathcal{D}_n)).
\end{equation}
This recalibration step is essential to ensure valid post-selection coverage. Conformal guarantees rely on exchangeability conditional on a fixed nonconformity score; however, optimizing $\theta$ using data breaks this condition. Intuitively, reusing the same data for both tuning $\theta$ and calibrating the radius leads to overfitting of the score function and overly optimistic (\ie, too small) uncertainty sets, resulting in undercoverage. By recalibrating on an independent dataset, we restore the required independence and recover valid coverage guarantees for the final decision.

\subsection{Computational Methods}
\label{sec:comp-methods}

To operationalize the proposed algorithm, we derive finite-dimensional reformulations of the decision-aware conformal set learning problem in~\eqref{eq:regret-new}--\eqref{eq:robust-new} for linear and more general convex objective functions $g$. Two main computational challenges arise: ($i$) the nonconformity score in~\eqref{eq:non_conformity} introduces a nonsmooth max-affine constraint, and ($ii$) the conformal radius $R(\mathcal D_n;\theta)$ is an empirical quantile that depends on $\theta$ through a nonsmooth order statistic. All proofs are included in Appendix~\ref{app:proof}.

We begin by reformulating the max-affine score constraint in~\eqref{eq:robust-new} as a linear system for fixed conformal radius $r$. Recall that $\theta_k = [w_k^\top, b_k]^\top$, where $w_k \in \mathbb{R}^d$ and $b_k \in \mathbb{R}$, define
\[
W \coloneqq [w_1,\ldots,w_K]^\top \in \mathbb{R}^{K \times d},
\qquad
b \coloneqq [b_1,\ldots,b_K]^\top \in \mathbb{R}^K,
\qquad
c_n \coloneqq \left\lceil (n+1)(1-\alpha)\right\rceil.
\]
\begin{lemma}[Linear reformulation of the score constraint]
\label{lem:score-linear}
For any radius $r$, the constraint $s(x,y;W,b) \le r$ is equivalent to the following system of linear inequalities
\[
    W \epsilon(x,y) + b \le r 1_K,
\]
where $1_K$ denotes the all-ones vector in $\mathbb{R}^K$.
\end{lemma}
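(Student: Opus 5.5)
The plan is to unfold the definition of the score in~\eqref{eq:non_conformity} and use the elementary fact that a maximum of finitely many reals is at most $r$ if and only if each of them is at most $r$.

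First, I rewrite each affine term explicitly. Since $\theta_k = [w_k^\top, b_k]^\top$ and $\tilde\epsilon(x,y) = [\epsilon(x,y)^\top, 1]^\top$, we have
\[
\theta_k^\top \tilde\epsilon(x,y) = w_k^\top \epsilon(x,y) + b_k, \qquad k \in [K].
\]
Hence $s(x,y;W,b) = \max_{k\in[K]} \{ w_k^\top \epsilon(x,y) + b_k \}$.

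Second, I prove the two implications. If $s(x,y;W,b)\le r$, then each term is bounded by the maximum, so $w_k^\top\epsilon(x,y)+b_k \le r$ for every $k$. Conversely, if all $K$ terms are at most $r$, then so is their maximum. The maximum is attained because the index set $[K]$ is finite.

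Finally, I stack the $K$ scalar inequalities. By definition of $W$ (rows $w_k^\top$) and $b$, the $k$-th entry of $W\epsilon(x,y)+b$ is $w_k^\top\epsilon(x,y)+b_k$. The componentwise system $W\epsilon(x,y)+b\le r1_K$ is therefore exactly the collection of the $K$ inequalities. The resulting constraints are linear in $\epsilon(x,y)$, and hence in $y$, since $\hat f(x)$ is fixed. There is no real obstacle here; the only point requiring care is the bookkeeping of the augmented vector, namely that the appended constant $1$ pairs with $b_k$.
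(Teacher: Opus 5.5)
Your proposal is correct and follows the paper's proof: both write the score as $\max_{k\in[K]}\{w_k^\top\epsilon(x,y)+b_k\}$, use that a finite maximum is at most $r$ if and only if every term is, and stack the $K$ inequalities into $W\epsilon(x,y)+b\le r1_K$. Your explicit check that the appended constant $1$ in $\tilde\epsilon(x,y)$ pairs with $b_k$ is a step the paper leaves implicit.
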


To capture the dependence of the conformal radius $R(\mathcal D_n;W,b)$ on $(W,b)$ using the big-$M$ method~\citep{nemhauser1988integer, williams2013model}, we introduce an auxiliary scalar $r\in\mathbb R$ and binary variables $t=[t_1,\ldots,t_n]^\top\in\{0,1\}^n$.
\begin{lemma}[Big-$M$ representation of the conformal radius]
\label{lem:radius-bigM}
Suppose $r \ge \underline r$ and $M>0$ is chosen such that, for all feasible $(W,b)$ and all $i\in[n]$, $W\epsilon_i+b \le (\underline r+M)1_K$. Then the empirical conformal radius can be represented as
\begin{equation}
\label{eq:radius-bigM}
\begin{aligned}
R(\mathcal D_n;W, b)
=
\min_{r, t}\quad & r \\
\text{s.t.}\quad
& W \epsilon_i+b
\le
(r+ M(1-t_i)) 1_K,
&& i\in[n],
\\
& 1_n^\top t \ge c_n, \quad t\in\{0,1\}^n.
\end{aligned}
\end{equation}
Here, $t_i=1$ enforces that the $i$th nonconformity score is no larger than $r$, whereas $t_i=0$ relaxes the corresponding constraint. The constraint $1_n^\top t\ge c_n$ therefore requires at least $c_n$ nonconformity scores to be bounded by $r$, and minimizing $r$ recovers the empirical conformal quantile.
\end{lemma}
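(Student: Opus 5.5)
The plan is to show that the optimal value of the mixed-integer program in~\eqref{eq:radius-bigM} coincides with the order-statistic characterization of $R(\mathcal D_n;W,b)$ in~\eqref{eq:radius}. Write $s_i \coloneqq s(X_i,Y_i;W,b)=\max_k\{(W\epsilon_i+b)_k\}$ and let $s_{(1)}\le\cdots\le s_{(n)}$ be the sorted scores. The infimum in~\eqref{eq:radius} is attained and equals $s_{(c_n)}$: the map $r\mapsto \frac1n\sum_i\mathbbm 1\{s_i\le r\}$ is a right-continuous step function, and it first reaches $c_n/n$ exactly at the $c_n$-th smallest score. (We implicitly assume $c_n\le n$, as is standard; otherwise both sides are $+\infty$ / the program is infeasible.) By Lemma~\ref{lem:score-linear}, the constraint with $t_i=1$ reads $W\epsilon_i+b\le r1_K$, which is equivalent to $s_i\le r$.

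For the upper bound, the optimum is at most $s_{(c_n)}$. I would set $r^\star=s_{(c_n)}$, put $t_i=1$ for the indices of the $c_n$ smallest scores, and $t_i=0$ otherwise. The cardinality constraint then holds with equality. For indices with $t_i=1$, feasibility follows from Lemma~\ref{lem:score-linear}, since $s_i\le r^\star$.

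The step I expect to need the most care is feasibility for indices with $t_i=0$. There the constraint is $W\epsilon_i+b\le (r^\star+M)1_K$, and this is exactly where the hypotheses on $\underline r$ and $M$ are used. Assuming $r^\star\ge\underline r$, the condition $W\epsilon_i+b\le(\underline r+M)1_K\le (r^\star+M)1_K$ gives feasibility. The hypothesis ``$r\ge\underline r$'' should therefore be read as: the program is posed over $r\ge\underline r$, with $\underline r$ a valid lower bound on the conformal radius. Under that reading $r^\star$ is admissible, and I would state this interpretation explicitly.

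For the lower bound, take any feasible $(r,t)$. Let $I=\{i:t_i=1\}$, so $|I|\ge c_n$. For $i\in I$, Lemma~\ref{lem:score-linear} gives $s_i\le r$. Hence at least $c_n$ scores are at most $r$, so $\frac1n\sum_i\mathbbm 1\{s_i\le r\}\ge c_n/n$, and by the definition~\eqref{eq:radius} as an infimum, $r\ge R(\mathcal D_n;W,b)$. Combining the two directions, the minimum exists, is attained at $(r^\star,t)$, and equals $R(\mathcal D_n;W,b)$. This also justifies the interpretation of the $t_i$ given in the statement.
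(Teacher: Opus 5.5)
Your proof is correct and takes the same route as the paper's. Both write $R(\mathcal D_n;W,b)$ as the smallest $r$ such that at least $c_n$ scores lie below it, then use Lemma~\ref{lem:score-linear} to turn each $s_i\le r$ into $W\epsilon_i+b\le r1_K$, which is switched on or off by $t_i$. Your version is more complete than the paper's short sketch. You check both inequalities separately, and you make explicit that the big-$M$ hypothesis is needed only for the $t_i=0$ constraints at $r^\star=s_{(c_n)}$. That step requires reading the assumption as $\underline r\le R(\mathcal D_n;W,b)$, and this reading is in fact necessary for the representation to hold.
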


Combining Lemmas~\ref{lem:score-linear} and~\ref{lem:radius-bigM} gives the following exact reformulations to problem~\eqref{eq:regret-new}--\eqref{eq:robust-new}.

\begin{prop}[Reformulation of problem~\eqref{eq:regret-new}--\eqref{eq:robust-new}]
\label{prop:efficient-solution}
The problem~\eqref{eq:regret-new}--\eqref{eq:robust-new} is equivalent to
\begin{equation}
\label{eq:finite-conformal-robust}
\begin{aligned}
\min_{W,b,z,r,t}
\quad & \max_{u\in\mathcal Y}~g(z,u) \\
\text{s.t.}\quad
& W \left(u - \hat{f}(x)\right)+b \le r1_K,\\
& W \epsilon_i+b \le (r+ M(1-t_i)) 1_K, \qquad i\in[n], \\
& 1_n^\top t\ge c_n, \quad z\in\mathcal Z, \quad t\in\{0,1\}^n.
\end{aligned}
\end{equation}
\end{prop}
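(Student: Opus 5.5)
We interpret problem~\eqref{eq:regret-new}--\eqref{eq:robust-new} as the joint learning problem in which $(W,b)$ are decision variables together with $z$, and the radius is the conformal radius $r=R(\mathcal D_n;W,b)$. Its value is
\[
\min_{W,b}\ \min_{z\in\mathcal Z}\ \max_{u:\, s(x,u;W,b)\le R(\mathcal D_n;W,b)} g(z,u).
\]
The plan is to show that the reformulation~\eqref{eq:finite-conformal-robust} has the same optimal value, and that its optimal $(W,b,z)$ are optimal for the original problem. The argument substitutes the two earlier lemmas and then handles the one point where the radius is no longer pinned to the empirical quantile.

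First, for any fixed $(W,b,r)$, Lemma~\ref{lem:score-linear} with $y=u$ shows that the inner feasible set $\{u: s(x,u;W,b)\le r\}$ equals $\{u: W(u-\hat f(x))+b\le r1_K\}$. This gives the first constraint of~\eqref{eq:finite-conformal-robust}. Second, Lemma~\ref{lem:radius-bigM} shows that the pairs $(r,t)$ satisfying the big-$M$ and cardinality constraints are exactly those with $r\ge R(\mathcal D_n;W,b)$, for some $t$. In one direction, if $r\ge R$, set $t_i=1$ on the $c_n$ smallest scores; those scores are at most $r$, and the remaining constraints hold by the choice of $M$. In the other direction, any feasible $t$ forces at least $c_n$ scores to be at most $r$, so $r\ge R$ by the definition~\eqref{eq:radius}.

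The main obstacle is that~\eqref{eq:finite-conformal-robust} does not minimize $r$ itself: the objective is the worst-case loss, so feasibility only gives $r\ge R(\mathcal D_n;W,b)$, not equality. I resolve this by monotonicity. Let $U(r)=\{u:W(u-\hat f(x))+b\le r1_K\}$. For $r'\ge r$ we have $U(r)\subseteq U(r')$, so $\max_{u\in U(r)}g(z,u)\le\max_{u\in U(r')}g(z,u)$ for every $z$.

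Hence, for any feasible $(W,b,z,r,t)$, replacing $r$ by $R(\mathcal D_n;W,b)$ keeps the point feasible. We keep $t$ equal to the indicator of the $c_n$ smallest scores; the big-$M$ bound must hold at this smaller $r\ge\underline r$, which is exactly the hypothesis of Lemma~\ref{lem:radius-bigM}. This replacement weakly decreases the objective. So the optimum of~\eqref{eq:finite-conformal-robust} is attained with $r=R(\mathcal D_n;W,b)$, and at such points its objective equals that of the original problem.

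Combining both directions proves equivalence. Every original feasible $(W,b,z)$ embeds into~\eqref{eq:finite-conformal-robust} with the same value. Every feasible point of~\eqref{eq:finite-conformal-robust} is dominated by one with $r=R$, whose value is an original value. The same argument holds pointwise in $x$, and also for the empirical average over $X_i$ with a shared $(W,b,r,t)$.
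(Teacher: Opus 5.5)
Your proposal is correct and follows essentially the same route as the paper: monotonicity of the worst-case value in $r$ lets you relax $r=R(\mathcal D_n;W,b)$ to the constraint $r\ge R(\mathcal D_n;W,b)$, Lemma~\ref{lem:score-linear} linearizes the inner score constraint, and Lemma~\ref{lem:radius-bigM} encodes $r\ge R$ through the existence of a feasible $t$. You are somewhat more explicit than the paper in two places: you reset $t$ to the indicator of the $c_n$ smallest scores when lowering $r$, and you note that the big-$M$ bound needs $R(\mathcal D_n;W,b)\ge\underline r$.
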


\begin{corollary}[Reformulation for linear objective]
\label{corollary:reform-linear}
Suppose $g(z,y)=g_0(z)+z^\top Qy$. The constrained min--max problem in~\eqref{eq:finite-conformal-robust} admits the following equivalent reformulation.
\begin{equation}
\label{eq:linear-exact-reformulation}
\begin{aligned}
\min_{W,b,z,\lambda,r,t}\quad & g_0(z) + \left(r1_K-b+W\hat f(x)\right)^\top \lambda
\\
\text{s.t.}\quad
& W^\top\lambda = Q^\top z,\qquad \lambda\in\mathbb R_+^K,\\
& W\epsilon_i+b \le (r+M(1-t_i))1_K, \qquad i\in[n],\\
& 1_n^\top t\ge c_n, \quad z\in\mathcal Z,\quad t\in\{0,1\}^n .
\end{aligned}
\end{equation}
\end{corollary}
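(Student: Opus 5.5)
The plan is to start from Proposition~\ref{prop:efficient-solution} and dualize only the inner maximization over $u$, holding the outer variables $(W,b,z,r,t)$ fixed. With $g(z,u)=g_0(z)+z^\top Q u$, the inner problem reads
\[
g_0(z)+\max_{u\in\mathbb R^d}\ \bigl\{ (Q^\top z)^\top u \;:\; W u \le r1_K - b + W\hat f(x) \bigr\}.
\]
This is a linear program in $u$. It is obtained by rewriting the first constraint of~\eqref{eq:finite-conformal-robust} as $Wu \le r1_K-b+W\hat f(x)$, where the right-hand side does not depend on $u$. I will take $\mathcal Y=\mathbb R^d$, or at least assume that $\mathcal Y$ is not binding, so that the feasible region is exactly this polyhedron. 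The remaining constraints involve only $(W,b,r,t,z)$ and pass through unchanged.

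The key step is LP duality. The dual of $\max\{c^\top u: Wu\le h\}$ is $\min\{h^\top\lambda: W^\top\lambda=c,\ \lambda\ge 0\}$. Here $c=Q^\top z$ and $h=r1_K-b+W\hat f(x)$, which gives exactly the objective term $\bigl(r1_K-b+W\hat f(x)\bigr)^\top\lambda$ and the constraints $W^\top\lambda=Q^\top z$, $\lambda\in\mathbb R^K_+$ in~\eqref{eq:linear-exact-reformulation}. Since the outer problem is a minimization, replacing the inner max by the dual min lets me merge $\lambda$ into the outer variables. The joint minimization over $(W,b,z,r,t,\lambda)$ then equals the original min--max value, and the minimizers correspond to each other.

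The step I expect to require care is establishing strong duality, including the edge cases, for every fixed outer point. I will handle these as follows.

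\emph{Nonempty polytope.} When the primal polyhedron is nonempty, positive spanning of $\{w_k\}$ (assumed in the paper) makes it bounded. The primal optimum is then finite and equals the dual optimum; in particular the dual is feasible. Concretely, since $Q^\top z$ lies in the cone generated by the rows $w_k$, a $\lambda\ge0$ exists.

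\emph{Empty polyhedron.} If the polyhedron is empty, for instance because $r<\max_k b_k$ roughly, then the primal max is $-\infty$. Both formulations then behave consistently only if infeasible primal points are excluded. I would argue that the radius constraints force $r$ to be at least an attained score, so that $r\ge s(X_i,Y_i;\theta)$ for some $i$. This makes the set $\{u: s(x,u;\theta)\le r\}$ nonempty, since it contains the translate $u=\hat f(x)+\epsilon_i$. If that argument is not airtight, I will note that the polyhedron is nonempty whenever $r\ge \max_k b_k$, because $u=\hat f(x)$ is then feasible. Otherwise, the dual is unbounded below exactly when the primal is infeasible and the dual is feasible. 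I will treat this degenerate case by restricting to feasible parameter configurations, as in Proposition~\ref{prop:efficient-solution}.

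With strong duality in hand, equivalence follows by standard min-min interchange.
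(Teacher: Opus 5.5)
Your proposal is correct and follows the paper's proof. Like the paper, you rewrite the inner problem as the LP $\max_u\{(Q^\top z)^\top u : Wu \le r1_K - b + W\hat f(x)\}$, pass to its dual over $\lambda\in\mathbb R_+^K$ with $W^\top\lambda = Q^\top z$, and absorb $\lambda$ into the outer minimization.

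Your edge-case treatment tightens the paper's argument, which only invokes strong duality ``whenever the inner linear program is feasible and has a finite optimum.'' Since $c_n\ge 1$ forces some $t_i=1$, the point $\hat f(x)+\epsilon_i$ is always feasible (taking $\mathcal Y=\mathbb R^d$, as you assume). Your first nonemptiness argument is therefore airtight, and the fallback hedge is unnecessary. Even without positive spanning, a feasible but unbounded primal pairs with an infeasible dual, so the two values still agree.
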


The mixed-integer formulation in~\eqref{eq:linear-exact-reformulation} can be solved exactly using spatial branch-and-bound solvers such as Gurobi~\citep{gurobi} for small $K$ and $n$. For general concave objectives $g(z,\cdot)$ in \eqref{eq:finite-conformal-robust}, we can employ Column-and-Constraint Generation (CCG)~\citep{zhao2012exact}, which alternates between a master problem over $(W,b,z,r,t)$ and an adversarial subproblem over $u$. See Alg.~\ref{alg:ccg} in Appendix~\ref{app:pinball-quantile} for details. However, the master problem remains a mixed-integer program with exponential complexity on the calibration size $n$~\citep{nemhauser1988integer}. When $g(z,\cdot)$ is convex, the adversarial subproblem over the polyhedral conformal set attains an optimum at an extreme point, but enumerating all vertices can be computationally demanding in high dimensions: a polytope in $\mathbb R^d$ defined by $K$ halfspaces can have $ \mathcal{O}\left(K^{\lfloor d/2\rfloor}\right) $ vertices.

To obtain a more efficient solution, we relax the empirical quantile constraint encoded in~\eqref{eq:radius-bigM} using a continuous penalty with pinball loss~\citep{Steinwart_2011}. This formulation is computationally preferable for two reasons: ($i$) It replaces the $n$ binary variables in \eqref{eq:radius-bigM} with a continuous, piecewise-linear penalty. ($ii$) It enables stochastic gradient methods. The gradient of each iteration can be estimated with subgradients using minibatches of calibration samples, significantly reducing the number of constraints per iteration and thus complexity. See Alg.~\ref{alg:pinball} and Appendix~\ref{app:pinball-quantile} for the full implementation.

\section{Theoretical Analysis}
\label{sec:theory}

This section presents two central theoretical properties: ($i$) the recalibrated parameter yields a valid conformal prediction set, and ($ii$) the suboptimality gap admits a finite-sample characterization under mild regularity conditions. The key assumptions and proofs are deferred to Appendix~\ref{app:proofs-theory}.

\subsection{Statistical Validity of the Recalibrated Conformal Set}

We begin by recalling the standard validity guarantee of conformal prediction for a fixed nonconformity score, which is well established in the literature~\citep{angelopoulos2023conformal}.
\begin{lemma}[Validity for Fixed $\theta$]
    \label{thm:validity}
Under the exchangeability assumption, for any fixed $\theta \in \Theta$, the conformal prediction set $\mathcal{C}(\cdot;\mathcal{D}_n,\theta)$ defined in~\eqref{eq:radius}--\eqref{eq:param-cf-set} satisfies
    \[
    \mathbb{P}\bigl\{ Y_{n+1} \in \mathcal{C}(X_{n+1}; \mathcal{D}_n, \theta) \bigr\} \ge 1 - \alpha.
    \]
\end{lemma}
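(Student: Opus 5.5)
The plan is the standard split-conformal exchangeability argument, applied with the score fixed at $\theta$. Write $S_i \coloneqq s(X_i,Y_i;\theta)$ for $i\in[n+1]$. Since $\theta$ is fixed and $\hat f$ is trained on data independent of $\{(X_i,Y_i)\}_{i=1}^{n+1}$, the map $(x,y)\mapsto s(x,y;\theta)$ is a fixed deterministic function. Applying a fixed function coordinatewise preserves exchangeability, so $(S_1,\dots,S_{n+1})$ is exchangeable. By the definition of $\mathcal{C}(\cdot;\mathcal{D}_n,\theta)$ in~\eqref{eq:param-cf-set}, the coverage event $\{Y_{n+1}\in\mathcal{C}(X_{n+1};\mathcal{D}_n,\theta)\}$ equals $\{S_{n+1}\le R(\mathcal{D}_n;\theta)\}$. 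It therefore suffices to lower-bound the probability of this scalar event.

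Next, identify $R(\mathcal{D}_n;\theta)$ as an order statistic. If $c_n=\lceil (n+1)(1-\alpha)\rceil \le n$, then by~\eqref{eq:radius}, $R(\mathcal{D}_n;\theta)=S_{(c_n)}$, the $c_n$-th smallest of $S_1,\dots,S_n$. If $c_n>n$, no finite $r$ satisfies the condition, so the infimum over the empty set gives $R=+\infty$, the set is all of $\mathcal{Y}$, and coverage holds trivially. In the main case I would use the deterministic fact that $S_{n+1}\le S_{(c_n)}$ (order statistic of the first $n$) if and only if $S_{n+1}$ is at most the $c_n$-th smallest of the augmented collection $S_1,\dots,S_{n+1}$. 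Indeed, inserting $S_{n+1}$ shifts the rank of each element above it by at most one, so this equivalence holds without any tie assumptions.

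Finally, by exchangeability, $S_{n+1}$ is equally likely to occupy any position among the $n+1$ scores. This gives $\mathbb{P}\{S_{n+1}\le \text{$c_n$-th smallest of } S_{1:n+1}\}\ge c_n/(n+1)\ge 1-\alpha$. With ties the inequality only helps. Formally, one writes $\mathbb{P}\{S_{n+1}\le Q\}=\frac{1}{n+1}\sum_{i}\mathbb{P}\{S_i\le Q\}$, where $Q$ is the symmetric augmented order statistic, and notes that at least $c_n$ of the $S_i$ satisfy $S_i\le Q$ almost surely.

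The main obstacle is handling ties and the equivalence between the $n$-sample and $(n+1)$-sample quantiles carefully. I would settle it with the averaging argument over $i$ just described, which avoids any continuity assumption on the score distribution.
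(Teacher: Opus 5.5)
Your proposal is correct and uses the same argument as the paper: a fixed score preserves exchangeability, and the uniform rank of the test score then gives coverage at least $1-\alpha$. You also make rigorous several steps the paper's proof leaves implicit, namely identifying $R(\mathcal{D}_n;\theta)$ with an order statistic, the trivial $c_n>n$ case, and handling ties through the averaging argument rather than assuming a uniformly distributed rank.
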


However, in our algorithm, $\theta$ is estimated from the calibration data, yielding a data-dependent choice $\hat{\theta}(\mathcal{D}_n)$. This breaks the exchangeability underlying conformal validity: for a fixed score, the augmented scores are exchangeable and the test rank is uniform, but when the score is fitted to the calibration data, this symmetry no longer holds. As a result, the prediction set tends to overfit the calibration samples, often becoming too small and leading to undercoverage.

The following result formalizes this breakdown. Its proof follows arguments similar to Theorem~2 by~\cite{barber2023conformal} and Corollary~1 by~\cite{zhou2025conformalized}. Let $d_{\mathrm{TV}}(\mathcal{D}_{n+1}, \mathcal{D}_{n+1}^{(i)} \mid \mathcal{D}_n)$ denote the total variation distance between the conditional distributions of the augmented sample $\mathcal{D}_{n+1} = \{(X_1,Y_1),\ldots,(X_{n+1},Y_{n+1})\}$ and the swapped sequence $\mathcal{D}_{n+1}^{(i)}$, obtained by exchanging the $(n+1)$-th and $i$-th entries, given $\mathcal{D}_n$. Intuitively, this quantity measures the degree to which exchangeability is violated after conditioning on the calibration data. Under perfect exchangeability, swapping the test point with any calibration point leaves the conditional distribution unchanged, yielding zero total variation distance. Therefore, larger values indicate stronger asymmetry induced by data reuse, corresponding to dependence between the learned parameter $\hat\theta(\mathcal{D}_n)$ and the nonconformity scores, which may lead to undercoverage.

\begin{lemma}[Potential Invalidity Under Data Sharing]
    \label{thm:invalid}
Under the exchangeability assumption, if the learning data are reused in constructing the conformal prediction set $\mathcal{C}(\cdot;\mathcal{D}_n,\hat\theta(\mathcal{D}_n))$, then:
    \[
    \mathbb{P}\bigl\{ Y_{n+1} \in \mathcal{C}(X_{n+1}; \mathcal{D}_n, \hat \theta(\mathcal{D}_n))\bigr\} \ge 1 - \alpha - \mathbb{E}\left[ d_{\rm TV}\left(\mathcal{D}_{n+1}, \mathcal{D}_{n+1}^{(i)} \mid \mathcal{D}_n\right) \right].
    \]
\end{lemma}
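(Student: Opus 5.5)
We follow the coverage argument under non-exchangeability of \citet{barber2023conformal}, with the learned parameter $\hat\theta$ treated as a symmetric-in-distribution fitting procedure applied to the augmented sample. Write $Z_i=(X_i,Y_i)$ and let $\hat\theta$ be fitted as in Step~1. Define the scores $S_i \coloneqq s(X_i,Y_i;\hat\theta(\mathcal{D}_n))$ for $i\in[n+1]$. By the definition~\eqref{eq:radius} of $R(\mathcal{D}_n;\hat\theta)$ as the $\lceil (n+1)(1-\alpha)\rceil$-th smallest calibration score, the standard deterministic fact applies. Miscoverage $S_{n+1}>R(\mathcal{D}_n;\hat\theta)$ implies that $S_{n+1}$ lies in the set of "strange" indices
\[
\mathcal{S}(S_1,\dots,S_{n+1}) \coloneqq \{ i\in[n+1] : S_i > \text{the } \lceil (n+1)(1-\alpha)\rceil\text{-th smallest of } S_1,\dots,S_{n+1}\}.
\]
This set has cardinality at most $\lfloor (n+1)\alpha\rfloor$ and is a deterministic, permutation-equivariant function of the score vector. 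Hence
\[
\mathbb{P}\{Y_{n+1}\notin\mathcal{C}\} \le \mathbb{P}\{n+1\in\mathcal{S}(\mathcal{D}_{n+1})\}.
\]

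The key step is to compare $\mathbb{P}\{n+1\in\mathcal{S}(\mathcal{D}_{n+1})\}$ with $\mathbb{P}\{i\in\mathcal{S}(\mathcal{D}_{n+1})\}$ for each calibration index $i$. Swapping entries $i$ and $n+1$ turns the event $\{n+1\in\mathcal{S}(\mathcal{D}_{n+1})\}$ into $\{i\in\mathcal{S}(\mathcal{D}^{(i)}_{n+1})\}$, where the score function, and hence $\hat\theta$, is regarded as fixed by the conditioning on $\mathcal{D}_n$. Conditioning on $\mathcal{D}_n$ and using the definition of total variation on this event gives
\[
\mathbb{P}\{n+1\in\mathcal{S}\mid\mathcal{D}_n\} \le \mathbb{P}\{i\in\mathcal{S}\mid\mathcal{D}_n\} + d_{\rm TV}\bigl(\mathcal{D}_{n+1},\mathcal{D}^{(i)}_{n+1}\mid\mathcal{D}_n\bigr).
\]
Taking expectations over $\mathcal{D}_n$ and averaging over $i$ then yields
\[
\mathbb{P}\{n+1\in\mathcal{S}\}\le \frac{1}{n+1}\,\mathbb{E}|\mathcal{S}| + \mathbb{E}[d_{\rm TV}] \le \alpha + \mathbb{E}[d_{\rm TV}].
\]
The term with $i=n+1$ contributes zero TV. If the TV term is understood as an average, or a maximum, over $i$, the bound holds as stated. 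Complementing gives the claimed lower bound.

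The main obstacle is making the conditioning precise. The fitted $\hat\theta(\mathcal{D}_n)$ depends on the very entries being swapped, so the score function itself changes under the swap. I would handle this by viewing the whole map $\mathcal{D}_{n+1}\mapsto(S_1,\dots,S_{n+1})$, with $\hat\theta$ computed from the first $n$ entries, as a fixed measurable function, as in Corollary~1 of \citet{zhou2025conformalized}. Then the event $\{n+1\in\mathcal{S}\}$ is a measurable set of the augmented sample, and the TV bound applies directly to that set. Interpreting the conditional TV as the quantity that measures exactly this asymmetry is what makes the inequality go through.
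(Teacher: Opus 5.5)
Your proposal is correct and takes the same route as the paper: condition on $\mathcal{D}_n$ so that $\hat\theta(\mathcal{D}_n)$ is frozen, apply the swap/total-variation argument of Theorem~2 in \citet{barber2023conformal}, then take expectations. The difference is that you unpack that argument explicitly, and your reading of the TV term as an average (or maximum) over $i$ is the right way to make sense of the free index, since the per-$i$ swap inequality yields $\alpha$ only after averaging over $i$.

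One caution: freezing $\hat\theta$ by conditioning, as in your second paragraph, is exactly what resolves your stated obstacle. The composite-map device in your last paragraph must therefore keep $\hat\theta=\hat\theta(\mathcal{D}_n)$ fixed rather than refit it on the swapped sample. Otherwise the identity $\{n+1\in\mathcal{S}(\mathcal{D}^{(i)}_{n+1})\}=\{i\in\mathcal{S}(\mathcal{D}_{n+1})\}$ no longer holds.
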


Our algorithm prevents such invalidity via a recalibration step using an independent split dataset, thereby restoring finite-sample validity even when $\theta$ is data-dependent.
\begin{theorem}[Validity of the Recalibrated Set]
\label{thm:re-calibrated-validity}
Suppose the recalibration data $\{(X_j,Y_j)\}_{j=1}^m$, the test point $(X_{m+1},Y_{m+1})$, and the data used to obtain $\hat{\theta}$ are mutually exchangeable. Then the recalibrated prediction set satisfies
\[
\mathbb{P}\bigl\{ Y_{m+1} \in \mathcal{C}(X_{m+1}; \mathcal{D}_m, \hat{\theta}(\mathcal{D}_n)) \bigr\}
\ge 1-\alpha .
\]
\end{theorem}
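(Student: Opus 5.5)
The plan is to condition on the learning data $\mathcal{D}_n$, which freezes the parameter $\hat\theta(\mathcal{D}_n)$. Once it is frozen, we are in the fixed-score setting of Lemma~\ref{thm:validity}. The only thing to check is that exchangeability of the recalibration points and the test point survives this conditioning. That is precisely where the hypothesis of the theorem enters: the learning data must be independent of, or jointly exchangeable with, $\mathcal{D}_m\cup\{(X_{m+1},Y_{m+1})\}$, and $\hat\theta$ must be a measurable function of $\mathcal{D}_n$ alone.

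\textbf{Step 1 (conditional exchangeability).} Fix a realization $\mathcal{D}_n = d$ and set $\theta_0 \coloneqq \hat\theta(d)$. I will argue that, conditionally on $\mathcal{D}_n = d$, the sequence $(X_1,Y_1),\dots,(X_{m+1},Y_{m+1})$ of recalibration and test points is still exchangeable.

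If the learning split is independent of the recalibration and test split, this is immediate. In the more general case where the whole collection is exchangeable, I would use the following argument. Let $\pi$ be any permutation acting only on the last $m+1$ indices. The joint law of $(\mathcal{D}_n, \mathcal{D}_m, (X_{m+1},Y_{m+1}))$ is invariant under $\pi$, since $\pi$ is a permutation of the full index set. Because $\pi$ leaves $\mathcal{D}_n$ fixed, the conditional law given $\mathcal{D}_n$ is invariant as well.

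This step is the main (though mild) obstacle. One must be careful that exchangeability of the full collection implies conditional exchangeability of the remaining block. The argument just given does this, working through regular conditional distributions.

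\textbf{Step 2 (fixed-score validity).} Conditionally on $\mathcal{D}_n=d$, define the scores $S_j \coloneqq s(X_j,Y_j;\theta_0)$ for $j\in[m+1]$. Since $\theta_0$ is deterministic under the conditioning, the $S_j$ are exchangeable, as they are a fixed function applied coordinatewise to an exchangeable sequence.

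By the definition~\eqref{eq:radius} with $n$ replaced by $m$, the event $Y_{m+1}\in\mathcal{C}(X_{m+1};\mathcal{D}_m,\theta_0)$ is exactly the event $S_{m+1}\le R(\mathcal{D}_m;\theta_0)$. The standard rank argument shows that $S_{m+1}$ is at most the $\lceil (m+1)(1-\alpha)\rceil$-th smallest of $S_1,\dots,S_m$ with probability at least $\lceil (m+1)(1-\alpha)\rceil/(m+1)\ge 1-\alpha$, with ties only helping. This is precisely Lemma~\ref{thm:validity} applied with calibration set $\mathcal{D}_m$ and parameter $\theta_0$, so I would simply invoke it conditionally. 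This yields
\[
\mathbb{P}\bigl\{Y_{m+1}\in\mathcal{C}(X_{m+1};\mathcal{D}_m,\hat\theta(\mathcal{D}_n)) \,\big|\, \mathcal{D}_n\bigr\}\ge 1-\alpha \quad\text{almost surely}.
\]

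\textbf{Step 3 (marginalization).} Take expectations over $\mathcal{D}_n$ using the tower property. This gives the unconditional bound $\ge 1-\alpha$ and completes the proof.

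It is worth remarking why the bad term of Lemma~\ref{thm:invalid} does not appear here. The total variation term there is generated by swapping the test point with a point that influenced $\hat\theta$. In the present setting every swap is between points of the recalibration and test block, on which $\hat\theta$ does not depend, so that term vanishes identically.
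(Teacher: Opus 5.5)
Your proposal is correct and follows essentially the same route as the paper's proof: condition on $\mathcal{D}_n$ so that $\hat\theta(\mathcal{D}_n)$ is frozen, apply Lemma~\ref{thm:validity} to the conditionally exchangeable recalibration-plus-test scores, and then take expectations over $\mathcal{D}_n$. Your Step~1 argument, that permutations fixing the indices of $\mathcal{D}_n$ leave the joint law and hence the conditional law invariant, supplies a justification for the conditional exchangeability that the paper simply asserts.
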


\subsection{Suboptimality Gap of the Prescribed Decision}
We now analyze the optimality of the proposed method by quantifying the gap between the decision induced by the learned parameter $\hat{\theta}$ and that of an oracle choice $\theta^\star$. The goal is to characterize how estimation, calibration, and optimization errors propagate to the final decision, and to establish finite-sample bounds on the resulting suboptimality gap.

To formalize this analysis, we first introduce oracle benchmarks and a radius-explicit formulation of the robust loss. For a fixed $x$ and any $\theta$, define the oracle conformal radius
\[
R^\star(\theta) = \inf_{r}\left\{ r \in \mathbb{R} : \mathbb{P}\left\{ s(X, Y;\theta) \leq r \right\} \geq 1-\alpha \right\},
\]
and the corresponding oracle parameter
\begin{equation}
    \label{eq:oracle}
    \theta^\star  =\arg\min_{\theta} \min_{z \in \mathcal{Z}} ~ \max_{u}~g(z,u),\quad\text{s.t.}~
    s(x,u;\theta) \le R^\star(\theta).
\end{equation}
Recall that the robust decision loss defined in \eqref{eq:regret-new} is denoted by $\mathcal{L}(\theta, r)$, where we suppress the subscript $x$.

The following set of regularity conditions enables finite-sample analysis of the learned parameter and its induced decision. The regularity assumptions are standard in empirical process analysis and primarily serve to control the estimation of the conformal radius and the stability of the induced robust objective.

\begin{assumption}[Regularity Conditions]
    \label{ass:gap}
Assume all data are independent, and:
    \begin{enumerate}[leftmargin=*, itemsep=0pt, topsep=0pt, parsep=0pt, partopsep=0pt]
\item[$(a)$] The function class $\left\{ (x, y) \mapsto \mathbbm{1}\left\{ s(x, y;\theta) \leq r \right\}, \theta \in \Theta, r \in \mathbb{R} \right\}$ has VC dimension $V$. \item[$(b)$] There exists $\kappa > 0$ such that for all $\theta \in \Theta$, $ f_\theta(t) \ge \kappa$ for $t$ in a neighborhood of $R^\star(\theta)$, where $f_\theta$ is the density of the nonconformity scores $s(X, Y; \theta)$. \item[$(c)$] There exists $L_r \geq 0$ such that for all $\theta \in \Theta$ and $x \in \mathcal{X}$, there is:
        \[
        \left|\mathcal{L}(\theta, r) - \mathcal{L}(\theta, r') \right| \leq L_r \cdot |r - r'|
        \]
\item[$(d)$] Problem~\eqref{eq:regret-new}--\eqref{eq:robust-new} is solved to optimization tolerance $\epsilon_{\rm opt}$.
    \end{enumerate}
\end{assumption}
The condition $(a)$ controls the complexity of the score class and ensures uniform convergence. The condition $(b)$ guarantees well-behaved quantile estimation near the target radius. The condition $(c)$ ensures stability of the downstream objective with respect to the radius. Finally, the condition $(d)$ accounts for optimization error.

\begin{theorem}[Finite-Sample Suboptimality Gap]
    \label{thm:gap}
Let $\hat \theta$ be the estimated parameter obtained by solving~\eqref{eq:regret-new}--\eqref{eq:robust-new}, and let $\theta^\star$ be the oracle parameter obtained by solving \eqref{eq:oracle}. Let $\mathcal{L}(\cdot)$ be the robust loss defined in \eqref{eq:regret-new}, and denote the suboptimality gap as: $ \Delta \coloneqq \mathcal{L}( \hat\theta(\mathcal{D}_n), R(\mathcal{D}_m;\hat \theta(\mathcal{D}_n))) - \mathcal{L}\left( \theta^\star, R^\star(\theta^\star)\right). $ Then, under Assumption~\ref{ass:gap}, there is with probability at least $1-\delta$:
    \[
    \Delta \leq
    \frac{L_r}{\kappa} \sqrt{\frac{\log\left(4 / \delta \right)}{2m}} +
    \frac{2 L_r}{\kappa}
    \sqrt{
    \frac{
    32\left[
    V\log(en/V)+\log(16/\delta)
    \right]
    }{n}
    } + \epsilon_{\rm opt}.
    \]
\end{theorem}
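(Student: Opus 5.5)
We decompose the gap by inserting two intermediate quantities: the loss of $\hat\theta$ at its oracle radius, $\mathcal{L}(\hat\theta, R^\star(\hat\theta))$, and the loss of $\theta^\star$ at its empirical learning-set radius, $\mathcal{L}(\theta^\star, R(\mathcal{D}_n;\theta^\star))$. Write $\hat\theta=\hat\theta(\mathcal{D}_n)$. Then
\begin{align*}
\Delta ={}& \bigl[\mathcal{L}(\hat\theta, R(\mathcal{D}_m;\hat\theta)) - \mathcal{L}(\hat\theta, R^\star(\hat\theta))\bigr]
+ \bigl[\mathcal{L}(\hat\theta, R^\star(\hat\theta)) - \mathcal{L}(\hat\theta, R(\mathcal{D}_n;\hat\theta))\bigr] \\
&+ \bigl[\mathcal{L}(\hat\theta, R(\mathcal{D}_n;\hat\theta)) - \mathcal{L}(\theta^\star, R(\mathcal{D}_n;\theta^\star))\bigr]
+ \bigl[\mathcal{L}(\theta^\star, R(\mathcal{D}_n;\theta^\star)) - \mathcal{L}(\theta^\star, R^\star(\theta^\star))\bigr].
\end{align*}
The third bracket is at most $\epsilon_{\rm opt}$ by the (approximate) optimality of $\hat\theta$ for the empirical problem, Assumption~\ref{ass:gap}(d). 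I am reading the objective as evaluated at the fixed context $x$ used throughout the section, consistent with the oracle definition~\eqref{eq:oracle}. By Assumption~\ref{ass:gap}(c), each of the remaining brackets is bounded by $L_r$ times a radius discrepancy. So it suffices to control three quantities:
\[
|R(\mathcal{D}_m;\hat\theta) - R^\star(\hat\theta)| \qquad\text{and}\qquad \sup_{\theta\in\Theta} |R(\mathcal{D}_n;\theta)-R^\star(\theta)|,
\]
the latter counted twice.

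\textbf{Recalibration term.} Conditional on $\mathcal{D}_n$, the parameter $\hat\theta$ is fixed, and the scores $s(X_j,Y_j;\hat\theta)$ on $\mathcal{D}_m$ are i.i.d., since all data are independent. Write $F_\theta$ for the score CDF. The DKW inequality (Massart's constant) gives, with probability at least $1-\delta/2$,
\[
\sup_t |\hat F_m(t) - F_{\hat\theta}(t)| \le \sqrt{\log(4/\delta)/(2m)}.
\]
Next we convert this CDF bound into a quantile bound. The empirical level $c_m/m$ differs from $1-\alpha$ by $O(1/m)$, which I will absorb or treat as lower order, noting it may need a small adjustment. By the density lower bound in Assumption~\ref{ass:gap}(b), $F_{\hat\theta}$ has slope at least $\kappa$ near $R^\star(\hat\theta)$. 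Hence a CDF deviation of $\eta$ moves the quantile by at most $\eta/\kappa$, provided $\eta$ is small enough to stay inside the neighborhood. This yields the first term $\frac{L_r}{\kappa}\sqrt{\log(4/\delta)/(2m)}$.

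\textbf{Learning-set terms.} Here $\hat\theta$ depends on $\mathcal{D}_n$, so we need uniformity over $\Theta$. By Assumption~\ref{ass:gap}(a), the class of indicators $\mathbbm{1}\{s(x,y;\theta)\le r\}$ has VC dimension $V$. The standard VC uniform deviation bound (symmetrization plus Sauer's lemma, in the form $\sup|P_n-P|\le \sqrt{32[V\log(en/V)+\log(16/\delta)]/n}$) then holds with probability at least $1-\delta/2$, simultaneously for all $(\theta,r)$. Applying the same inverse-CDF argument with Assumption~\ref{ass:gap}(b) gives
\[
\sup_\theta |R(\mathcal{D}_n;\theta)-R^\star(\theta)| \le \kappa^{-1}\sqrt{32[V\log(en/V)+\log(16/\delta)]/n}.
\]
This bound is used for both $\hat\theta$ and $\theta^\star$, producing the factor $2L_r/\kappa$. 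A union bound over the two events gives overall probability at least $1-\delta$.

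\textbf{Main obstacle.} The delicate step is the quantile inversion: turning a sup-norm CDF deviation into a radius deviation using only a \emph{local} density lower bound. I would argue by monotonicity of $F_\theta$. If the empirical quantile lay outside $R^\star(\theta)\pm\eta/\kappa$, then $F_\theta$ would differ from the target level by more than $\eta$ at the boundary point, contradicting the uniform bound. This requires the neighborhood in (b) to have radius at least $\eta/\kappa$, which holds for $n,m$ large enough or can be stated as an implicit condition. A second, lesser subtlety is the $\lceil (n+1)(1-\alpha)\rceil/n$ versus $1-\alpha$ level mismatch, which contributes $O(1/n)$. I would either fold it into the constants or note that it is dominated by the square-root terms.
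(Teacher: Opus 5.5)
Your proposal follows the paper's proof essentially step for step. It uses the same four-bracket decomposition (the recalibration error at $\hat\theta$, the two learning-set radius errors at $\hat\theta$ and $\theta^\star$, and the $\epsilon_{\rm opt}$ term) and the same tools: DKW for the $\mathcal{D}_m$ term, the $8(en/V)^V e^{-n\epsilon^2/32}$ VC bound for the uniform $\mathcal{D}_n$ term, inversion through $\kappa$, Lipschitz transfer via $L_r$, and a $\delta/2$ union bound. Your explicit conditioning on $\mathcal{D}_n$ in the recalibration step is a cleaner justification than the paper's appeal to the ``arbitrariness of $\theta$''.

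The two caveats you flag are genuine, and the paper's proof passes over them silently too. First, as written, the $c_m/m$ and $c_n/n$ level offsets add an $L_r\,O(1/m+1/n)/\kappa$ term that is absent from the displayed bound. Second, the inversion needs the neighborhood in Assumption~\ref{ass:gap}(b) to have a width, uniform in $\theta$, of at least the CDF deviation divided by $\kappa$. That is an extra condition on $n$, $m$ and $\delta$, not a consequence of the stated assumptions.
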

In $\mathcal{O}_{\mathbb{P}}$ notation~\citep{van2000asymptotic}, the bound simplifies to
\[
\Delta \leq \mathcal{O}_{\mathbb{P}}\left( \sqrt{\frac{1}{m}} + \sqrt{\frac{\log n}{n}} \right) + \epsilon_{\rm opt}.
\]
This derived bound in Theorem~\ref{thm:gap} reveals two statistical error terms in addition to the optimization tolerance: a calibration error scaling as $m^{-1/2}$ and a learning error scaling as $\sqrt{(\log n)/n}$. Since the calibration error decays faster asymptotically, it is typically beneficial to allocate more data to the learning stage in step 1 than to the recalibration in step 2 to achieve superior optimization performance. Note that because the bound contains unknown constants ($V$, $L_r$, $\epsilon_{\rm opt}$, and $\kappa$), we present it only as a tool to provide insights for choosing the data-splitting ratio rather than a direct quantitative measurement of decision suboptimality. The proof of Theorem~\ref{thm:gap} is provided in Appendix~\ref{app:gap}.

\begin{remark}
The guarantees provided in Theorem~\ref{thm:re-calibrated-validity} and Theorem~\ref{thm:gap} continue to hold even when a surrogate pinball loss is used. Specifically, Theorem~\ref{thm:re-calibrated-validity} essentially relies on the exchangeability argument, which is not harmed by using surrogates in the optimization problem itself; Theorem~\ref{thm:gap} incorporates the error term $\epsilon_{\rm opt}$, which encodes the error of using surrogate losses, and thus the derived high-level bound remains valid.
\end{remark}

\section{Experiments}

\begin{figure}[!t]
    \centering
    \includegraphics[width=\linewidth]{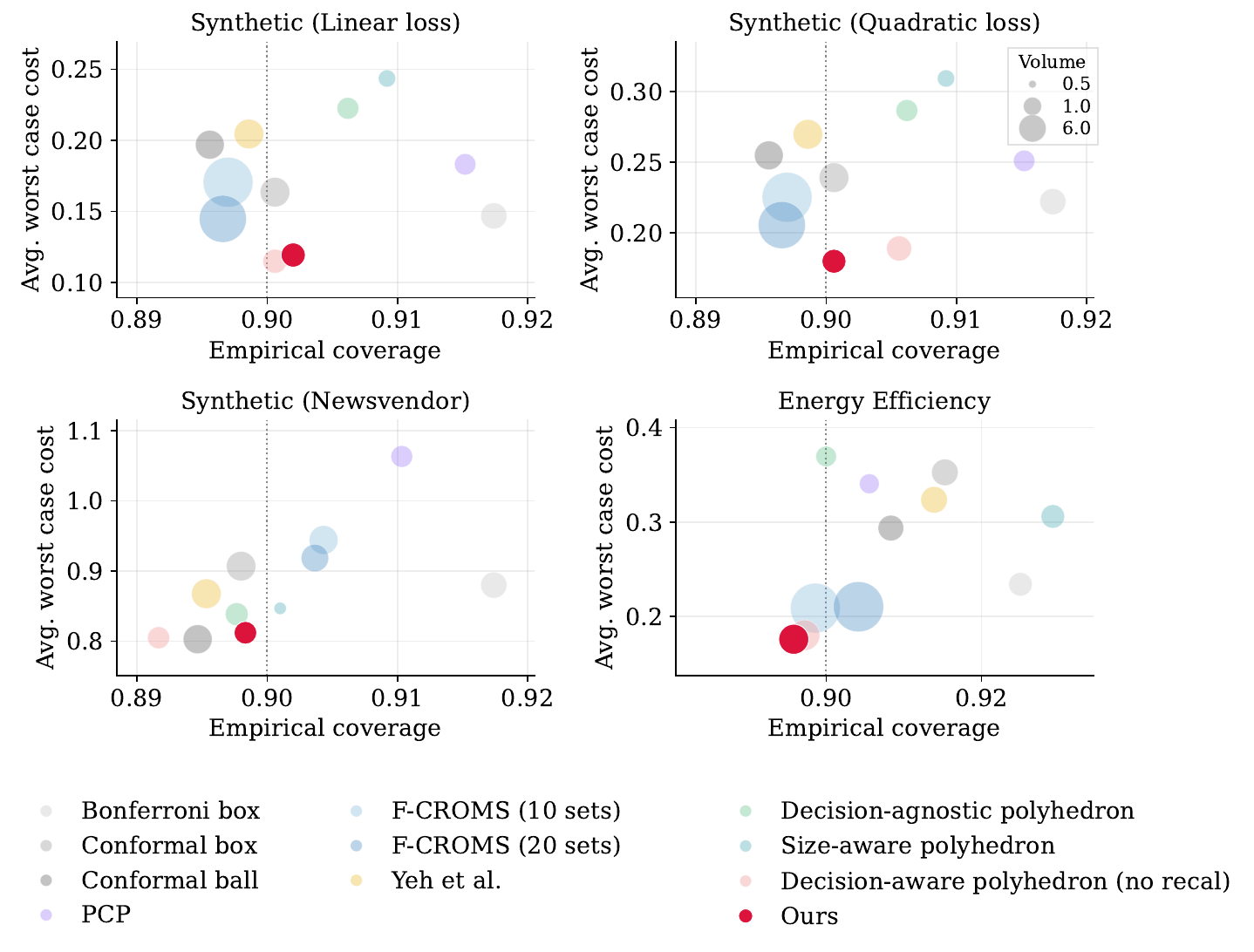}
\caption{The proposed method in comparison with baselines. The horizontal axis shows the empirical coverage rate; the vertical axis presents the average worst-case cost; and bubble size represents the volume of the uncertainty set. }
    \label{fig:results}

\end{figure}

We evaluate the proposed decision-aware conformal framework on robust decision problems using both synthetic and real-data tasks. Across all experiments, we first fit a predictor $\hat f:\mathcal X\to\mathbb R^d$ on an independent training split and construct residuals $\epsilon_i=Y_i-\hat f(X_i)$. The remaining data are split into a learning set $\mathcal D_n$, used to learn the set geometry $\hat\theta(\mathcal D_n)$, and an independent calibration set $\mathcal D_m$, used only for the final conformal radius. This yields the final set $\mathcal C(x;\mathcal D_m,\hat\theta(\mathcal D_n))$, from which robust decisions are computed on held-out test covariates. Baselines that do not learn set geometry use $\mathcal D_n\cup\mathcal D_m$ as their calibration set.

We compare against decision-agnostic conformal sets, including Bonferroni boxes~\citep{dunn1961multiple}, conformal boxes or balls~\citep{vovk2005algorithmic}, and PCP~\citep{pmlr-v206-wang23n}, as well as decision-aware alternatives based on F-CROMS~\citep{bao2025optimal} and PICNN scores~\citep{yeh2024end}. We also include three ablations of our method: a polyhedral set without decision-aware learning, a size-aware polyhedron that uses volume as the loss function, and our decision-aware polyhedron without the post-selection recalibration step. We evaluate empirical coverage, average worst-case cost, average regret, set volume, and wall-clock time. Additional details on the objective function, datasets, and implementation are deferred to Appendix~\ref{app:experiments}. The experiments reported in this section are averaged over five random runs.

\begin{figure}[!t]
    \centering
    \includegraphics[width=\linewidth]{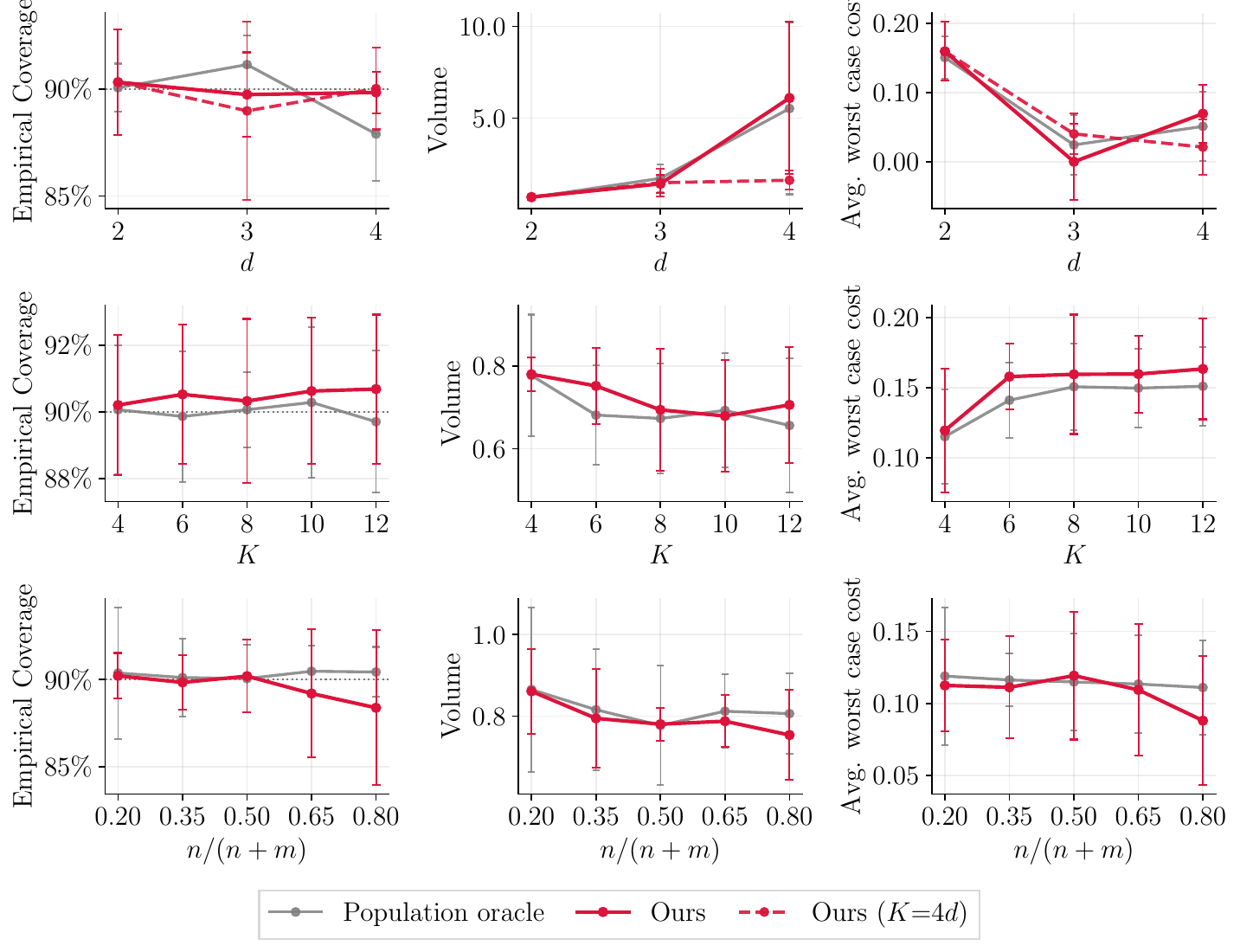}
\caption{Hyperparameter sensitivity of coverage, volume, and worst-case cost. Ours uses $K=8$ in the first row and $K=4$ in the last row. }
    \label{fig:sensitivity}

\end{figure}

\paragraph{Main results.}
Fig.~\ref{fig:results} and Tables~\ref{tab:combined_main_results} and~\ref{tab:energy_efficiency_linear_auto} summarize the results. Across all tasks, our recalibrated decision-aware polyhedron attains empirical coverage close to the nominal $90\%$ level, confirming that the independent calibration split restores validity after learning the set geometry. Among valid methods, our method consistently achieves competitive or lowest worst-case cost. The comparison with PCP~\citep{pmlr-v206-wang23n} and the size-aware polyhedron shows that sharper or smaller uncertainty sets are not necessarily better for downstream robust optimization: although both often produce smaller-volume sets, this does not consistently translate into lower worst-case cost or regret, as minimizing volume can remove directions that are important for lowering the worst-case decision loss. Compared with F-CROMS~\citep{bao2025optimal}, adding more candidate sets of hyperplanes can reduce regret, but its model-selection approach remains less flexible than ours because it selects among a finite candidate class rather than directly learning the conformal-set geometry. The ablation without recalibration highlights the importance of post-selection calibration. Although its worst-case cost is often close to that of the recalibrated method, its coverage can fall below the target level because the same data are used both to select the set geometry and to calibrate the radius, consistent with Lemma~\ref{thm:invalid}. By contrast, the proposed two-stage procedure restores coverage with little loss in downstream performance.

\paragraph{Sensitivity analysis.}
Fig.~\ref{fig:sensitivity} studies the sensitivity of our method to $d$, $K$, and $n/(n+m)$. As $d$ increases, coverage remains close to the nominal $90\%$ level, while volume grows to satisfy the joint coverage. Scaling the number of hyperplanes with $K=4d$ helps control the volume growth and stabilizes worst-case cost, suggesting that the polyhedral template should be more expressive in higher dimensions. Varying $K$ has only a mild effect on coverage. Larger $K$ initially reduces volume, while worst-case cost and empirical coverage remain largely stable. Finally, varying $n/(n+m)$ shows the expected learning--calibration trade-off: allocating more data to $\mathcal D_n$ improves geometry learning and can reduce cost, consistent with Theorem~\ref{thm:gap}. But leaving too few samples in $\mathcal D_m$ also makes the conformal radius less stable and can lead to coverage below the nominal level.

\section{Conclusion and Discussion}

We introduced a decision-aware conformal framework for robust optimization that learns uncertainty sets by optimizing downstream robust loss while preserving finite-sample coverage through independent recalibration. The polyhedral parameterization captures directional uncertainty while admitting finite-dimensional reformulations. We established validity for the final learned set and derived a suboptimality-gap bound that separates calibration and learning errors, showing that conformal sets can be designed for both coverage and decision quality. A limitation is the focus on polyhedral geometry, which may be less suitable for highly nonlinear uncertainty. Future work could extend this framework to richer neural or flow-based set families while maintaining calibration validity and computational tractability.

\bibliographystyle{plainnat}
\bibliography{ref}

\appendix

\section{Proofs in Section~\ref{sec:comp-methods}}
\label{app:proof}

\subsection{Proof of Lemma~\ref{lem:score-linear}}

\begin{proof}
By definition of the max-affine nonconformity score,
\[
    s(x,y;\theta)
    =
    \max_{k\in[K]}
    \{w_k^\top \epsilon(x,y)+b_k\}.
\]
Therefore, for any $r\in\mathbb R$,
\[
    s(x,y;\theta)\le r
    \quad\Longleftrightarrow\quad
    w_k^\top \epsilon(x,y)+b_k\le r,
    \quad \forall k\in[K].
\]
Stacking these $K$ inequalities gives
\[
    W\epsilon(x,y)+b\le r1_K.
\]
\end{proof}

\subsection{Proof of Lemma~\ref{lem:radius-bigM}}

\begin{proof}
By definition in Eq.~\ref{eq:radius},
\begin{equation}
\begin{aligned}
R(\mathcal{D}_n; \theta)
&=
\inf \left\{ r \in \mathbb{R}:
\frac{1}{n} \sum_{i=1}^n
\mathbbm{1}\{ s(X_i,Y_i;\theta) \le r \}
\ge
\frac{\left\lceil (n+1)(1-\alpha) \right\rceil}{n}
\right\} \\
&= \min \{ r \in \mathbb{R} : \sum_{i=1}^n \mathbbm{1}(s(X_i, Y_i; \theta) \le r) \ge c_n \}.
\end{aligned}
\end{equation}

By Lemma~\ref{lem:score-linear}, $s(X_i,Y_i;\theta)\le r$ can be expressed as
$$
W \epsilon_i + b \le  r1_K =(r + M(1-t_i))1_K
$$
with large $M$ when $t_i=1$. Therefore,
\begin{equation}
\begin{aligned}
R(\mathcal D_n;W, b)
=
\min_{r, t}\quad & r \\
\text{s.t.}\quad
& W \epsilon_i+b
\le
(r+ M(1-t_i)) 1_K,
&& i\in[n],
\\
& 1_n^\top t \ge c_n, \quad t\in\{0,1\}^n.
\end{aligned}
\end{equation}
\end{proof}

\subsection{Proof of Proposition~\ref{prop:efficient-solution}}
\label{app:closed-form}

Define an auxiliary variable $r$. The worst-case value
\begin{align}
    \max_{u \in \mathcal{Y}} ~ & g(z,u)\\
    \text{s.t.} \quad
    & s(x,u; \theta) \le r.
\end{align}
is nondecreasing in $r$, because increasing $r$ enlarges the feasible region of the inner maximization. Hence, problem~\eqref{eq:regret-new}--\eqref{eq:robust-new} is equivalent to:
\begin{equation}
\label{eq:r_reform}
\min_{\substack{\theta,z,r\\
r\ge R(\mathcal D_n;\theta)}}
\max_{\substack{u\in\mathcal Y\\s(x,u;\theta)\le r}}
g(z,u).
\end{equation}
By Lemma~\ref{lem:score-linear}, the score constraint in the inner maximization layer admits the linear representation
\[
    s(x,u;W,b)\le r
    \quad\Longleftrightarrow\quad
    W(u-\hat f(x))+b\le r1_K .
\]
It remains to represent the constraint $r\ge R(\mathcal D_n;W,b)$ in the outer minimization layer. By Lemma~\ref{lem:radius-bigM}, for a valid choice of $M$:
\[
R(\mathcal D_n;W,b)
=
\min_{r,t}
\left\{
r:
W\epsilon_i+b\le (r+M(1-t_i))1_K,\ i\in[n],
\quad
1_n^\top t\ge c_n,\quad
t\in\{0,1\}^n
\right\}.
\]
Equivalently, $r\ge R(\mathcal D_n;W,b)$ holds if and only if there exists $t\in\{0,1\}^n$ such that
\[
    W\epsilon_i+b\le (r+M(1-t_i))1_K,
    \qquad i\in[n],
    \qquad
    1_n^\top t\ge c_n .
\]

Therefore, \eqref{eq:r_reform} is equivalent to:
\begin{equation}
\begin{aligned}
\min_{W,b,z,r,t}
\quad & \max_{u\in\mathcal Y}~g(z,u) \\
\text{s.t.}\quad
& W \left(u - \hat{f}(x)\right)+b \le r1_K,\\
& W \epsilon_i+b \le (r+ M(1-t_i)) 1_K, \qquad i\in[n], \\
& 1_n^\top t\ge c_n, \quad z\in\mathcal Z, \quad t\in\{0,1\}^n.
\end{aligned}
\end{equation}

This is exactly~\eqref{eq:finite-conformal-robust}.

\subsection{Proof of Corollary~\ref{corollary:reform-linear}}
\label{app:linear-loss}

\begin{proof}
By Proposition~\ref{prop:efficient-solution}, the robust part of \eqref{eq:finite-conformal-robust} can be written as
\[
    \min_{W,b,z,r,t}
    \left\{
    \max_{u}
    \left\{
        g_0(z)+z^\top Qu:
        W(u-\hat f(x))+b\le r1_K
    \right\}
    \right\},
\]
together with the calibration constraints
\[
    W\epsilon_i+b\le (r+M(1-t_i))1_K,\qquad i\in[n],
    \qquad
    1_n^\top t\ge c_n,\qquad t\in\{0,1\}^n .
\]
Since $g_0(z)$ does not depend on $u$, the inner maximization is equivalent to
\[
    g_0(z)
    +
    \max_u
    \left\{
        z^\top Qu:
        W(u-\hat f(x))+b\le r1_K
    \right\}.
\]
The constraint can be rewritten as
\[
    Wu \le r1_K-b+W\hat f(x).
\]
Therefore, the inner maximization is the linear program
\[
    \max_u
    \left\{
        (Q^\top z)^\top u:
        Wu\le r1_K-b+W\hat f(x)
    \right\}.
\]
Introducing dual variables $\lambda\in\mathbb R_+^K$ associated with the constraints $Wu\le r1_K-b+W\hat f(x)$, the dual problem is
\[
    \min_{\lambda\in\mathbb R_+^K}
    \left\{
        \bigl(r1_K-b+W\hat f(x)\bigr)^\top \lambda:
        W^\top \lambda = Q^\top z
    \right\}.
\]
Under strong duality, which holds whenever the inner linear program is feasible and has a finite optimum, the inner maximization is equal to its dual. Hence, the min--max problem is equivalent to
\[
\begin{aligned}
\min_{W,b,z,\lambda,r,t}\quad
& g_0(z)
+
\bigl(r1_K-b+W\hat f(x)\bigr)^\top \lambda \\
\text{s.t.}\quad
& W^\top\lambda = Q^\top z,\qquad \lambda\in\mathbb R_+^K,\\
& W\epsilon_i+b \le (r+M(1-t_i))1_K, \qquad i\in[n],\\
& 1_n^\top t\ge c_n,\\
& z\in\mathcal Z,\quad t\in\{0,1\}^n .
\end{aligned}
\]
\end{proof}

\subsection{Polyhedral Conformal Set Learning}
\label{app:pinball-quantile}

In this subsection, write $\mathcal{C}_\theta(x,r)\coloneqq\{y\in\mathcal{Y}:s(x,y;\theta)\le r\}$ for the set at a specified radius.

This section elaborates on the two optimization algorithms described in the main text: the Column-and-Constraint Generation (CCG) method (Alg.~\ref{alg:ccg}) for general concave objectives, and the stochastic pinball-subgradient method (Alg.~\ref{alg:pinball}) for scalable training.

\paragraph{CCG learning.}

Alg.~\ref{alg:ccg} solves the problem~\eqref{eq:finite-conformal-robust} by iteratively growing a finite scenario set $\mathcal{S}_h\subseteq\mathcal{Y}$. At CCG iteration $h$, the restricted master problem is:
\begin{equation}
\label{eq:general-master}
\begin{aligned}
\mathrm{MP}_h:\quad
\min_{W,b,z,r,t,\eta}\quad & \eta \\
\text{s.t.}\quad
& g(z,y^j)\le\eta,\qquad\forall\,y^j\in\mathcal{S}_h,\\
& W(y^j-\hat{f}(x))+b\le r\mathbf{1}_K,\qquad\forall\,y^j\in\mathcal{S}_h,\\
& W\epsilon_i+b\le\bigl(r+M(1-t_i)\bigr)\mathbf{1}_K,\quad i\in[n],\\
& \mathbf{1}_n^\top t\ge c_n,\quad z\in\mathcal{Z},\quad t\in\{0,1\}^n.
\end{aligned}
\end{equation}
Given a master solution $(\theta^h,z^h,r^h,t^h,\eta^h)$ with $\theta^h=(W^h,b^h)$ and conformal radius $r^h$, the subproblem finds the worst-case outcome in the current conformal set $\mathcal{C}_{\theta^h}(x,r^h)$:
\begin{equation}
\label{eq:general-separation}
\mathrm{SP}_h:\quad
\zeta^h
=\max_{y\in\mathcal{Y}}\;g(z^h,y)
\quad\text{s.t.}\quad
W^h\left(y-\hat{f}(x)\right)+b^h\le r^h\mathbf{1}_K.
\end{equation}
When $g(z,\cdot)$ is convex, the maximizer over the polyhedral set can be chosen at an extreme point. CCG terminates when $\zeta^h-\eta^h\le\varepsilon$, certifying $\varepsilon$-optimality for~\eqref{eq:finite-conformal-robust}. Because the master problem contains $n$ binary variables $\{t_i\}$, its complexity can grow with $n$, motivating the pinball surrogate below.

\begin{algorithm}[t]
\caption{CCG for the general concave objective~\eqref{eq:finite-conformal-robust}}
\label{alg:ccg}
\begin{algorithmic}[1]
\Require $\mathcal D_n$, $\varepsilon>0$, $M$
\State Initialize scenario set $\mathcal S_0\gets\emptyset$, iteration counter $h\gets0$
\Repeat
    \State Solve the restricted master problem $\mathrm{MP}_h$ in~\eqref{eq:general-master}
    \State Let $(\theta^h,z^h,r^h,t^h,\eta^h)$ be an optimal master solution
    \State Solve the separation problem $\mathrm{SP}_h$ in~\eqref{eq:general-separation}
    \State Let $\zeta^h$ be its optimal value and $y^{h+1}$ an optimizer
    \If{$\zeta^h-\eta^h\le \varepsilon$}
        \State \Return $(\theta^h,z^h,r^h,t^h)$
    \Else
        \State Add adversarial scenario:
        $
        \mathcal S_{h+1}\gets \mathcal S_h\cup\{y^{h+1}\}
        $
        \State Set $h\gets h+1$
    \EndIf
\Until{termination}
\end{algorithmic}
\end{algorithm}

\begin{algorithm}[t]
\caption{Stochastic pinball-subgradient method for~\eqref{eq:pinball-surrogate-app}}
\label{alg:pinball}
\begin{algorithmic}[1]
\Require Learning data $\mathcal D_n$, stepsizes $\{\eta_\ell\}$, minibatch size $B$, calibration penalty $\gamma$, $\tau_n=c_n/n$
\State Initialize $(\theta^0,r^0,z^0)\in\Theta\times\mathbb R\times\mathcal Z$, with $\theta^0=(W^0,b^0)$;
\For{$\ell=0,1,2,\ldots$}
    \State Sample minibatch $\mathcal B_\ell\subseteq[n]$, $|\mathcal B_\ell|=B$;
    \State
    $y^\ell\in\arg\max_{y\in\mathcal C_{\theta^\ell}(x,r^\ell)}g(z^\ell,y)
    $;
        $\widehat L_{\mathcal B_\ell}(\theta,r)
        =
        \frac1B\sum_{i\in\mathcal B_\ell}
        \rho_{\tau_n}\bigl(S_i(\theta)-r\bigr);
    $
    \State Compute a stochastic subgradient $\widehat G^\ell$ of
    $
        g(z,y^\ell)+\gamma\widehat L_{\mathcal B_\ell}(\theta,r)
    $
    at $(\theta^\ell,r^\ell,z^\ell)$ using~\eqref{eq:subgradients};
    \State
$        (\theta^{\ell+1},r^{\ell+1},z^{\ell+1})
        \gets
        \operatorname{Proj}_{\Theta\times\mathbb R\times\mathcal Z}
        \bigl((\theta^\ell,r^\ell,z^\ell)-\eta_\ell\widehat G^\ell\bigr);
    $
\EndFor
\State \Return $\hat\theta=(\hat W,\hat b)$.
\end{algorithmic}
\end{algorithm}

\paragraph{Stochastic learning with pinball loss.}
The exact formulation in~\eqref{eq:finite-conformal-robust} enforces empirical coverage through binary variables that select which calibration residuals are contained in the candidate set. To obtain a scalable continuous approximation, we replace this discrete quantile-selection mechanism with a pinball calibration penalty.

For a fixed set parameter $\theta$, write
\[
    S_i(\theta) \coloneqq s(X_i,Y_i;\theta),
    \qquad i\in[n],
\]
and let $\tau_n\coloneqq c_n/n$. The empirical conformal radius is the smallest value of $r$ such that at least $c_n$ nonconformity scores are no larger than $r$. Equivalently, it is the smallest empirical $\tau_n$-quantile of the scores $\{S_i(\theta)\}_{i=1}^n$. This quantile admits the standard pinball-loss characterization. Define
\[
    \rho_\tau(u)
    \coloneqq
    \tau u_+ + (1-\tau)(-u)_+,
    \qquad
    u_+\coloneqq \max\{u,0\},
\]
and
\[
    L_n(\theta,r)
    \coloneqq
    \frac1n
    \sum_{i=1}^n
    \rho_{\tau_n}\bigl(S_i(\theta)-r\bigr).
\]
For fixed $\theta$, the subdifferential with respect to $r$ is
\[
    \partial_r L_n(\theta,r)
    =
    \frac1n
    \left[
        \#\{i:S_i(\theta)<r\}-\tau_n n,\;
        \#\{i:S_i(\theta)\le r\}-\tau_n n
    \right].
\]
Hence,
\[
    0\in \partial_r L_n(\theta,r)
    \quad\Longleftrightarrow\quad
    \#\{i:S_i(\theta)<r\}
    \le
    \tau_n n
    \le
    \#\{i:S_i(\theta)\le r\}.
\]
Thus, any minimizer of $L_n(\theta,r)$ is an empirical $\tau_n$-quantile of the nonconformity scores, and the smallest minimizer recovers the conformal radius used in Lemma~\ref{lem:radius-bigM}. The pinball loss therefore provides a continuous relaxation of the binary quantile constraint in~\eqref{eq:finite-conformal-robust}.

We solve the following surrogate problem:
\begin{equation}
\label{eq:pinball-surrogate-app}
\min_{\theta\in\Theta,\;r\in\mathbb R,\;z\in\mathcal Z}
\left\{
    \max_{y\in\mathcal C_\theta(x,r)}
    g(z,y)
    +
    \gamma L_n(\theta,r)
\right\},
\end{equation}
where $\gamma>0$ controls the strength of the calibration penalty. The first term learns a polyhedral geometry that is favorable for the downstream robust decision, while the second term keeps the radius aligned with the target empirical conformal quantile. Compared with the exact mixed-integer formulation, this surrogate removes the binary variables and can be optimized with stochastic first-order methods.

For use in Alg.~\ref{alg:pinball}, the subgradient of the pinball term with respect to its argument is
\begin{equation}
\label{eq:subgradients}
    \xi_i
    \in
    \partial \rho_{\tau_n}\bigl(S_i(\theta)-r\bigr)
    =
    \begin{cases}
        \tau_n, & S_i(\theta)>r,\\
        [-(1-\tau_n),\,\tau_n], & S_i(\theta)=r,\\
        -(1-\tau_n), & S_i(\theta)<r .
    \end{cases}
\end{equation}
Consequently, scores above the current radius push $r$ upward, while scores below the current radius push it downward. When differentiating with respect to $\theta$, the same coefficients $\xi_i$ weight the subgradients of the scores $S_i(\theta)$, allowing the set geometry and radius to be updated jointly.

\section{Proofs in Section~\ref{sec:theory}}
\label{app:proofs-theory}

\subsection{Proof for Lemma~\ref{thm:validity}}

\begin{proof}
Since the nonconformity score $s(\cdot,\cdot;\theta)$ is deterministic for fixed $\theta$, and we assume that the augmented sample $ \{(X_i,Y_i)\}_{i=1}^{n+1} $ is exchangeable, the induced score sequence $ \left\{ s(X_i,Y_i;\theta) \right\}_{i=1}^{n+1} $ is also exchangeable. Therefore, the rank of $s(X_i,Y_i;\theta)$ among the $n+1$ scores is uniformly distributed over $\{1, \ldots, n+1\}$. By the definition of the conformal radius $R(\mathcal{D}_n;\theta)$ in \eqref{eq:radius}, there is
    $$
    \mathbb{P}
    \left\{
    s(X_{n+1},Y_{n+1};\theta)
    \leq
    R(\mathcal{D}_n;\theta)
    \right\}
    \geq
    1-\alpha.
    $$
Using the definition of $\mathcal{C}(x;\mathcal{D}_n,\theta)$ completes the proof:
    $$
    \mathbb{P}
    \left\{
    Y_{n+1}
    \in
    \mathcal{C}(X_{n+1};\mathcal{D}_n,\theta)
    \right\}
    \geq
    1-\alpha.
    $$
\end{proof}

\subsection{Proof for Lemma~\ref{thm:invalid}}

\begin{proof}
Since $\hat\theta(\mathcal{D}_n)$ is learned from $\mathcal{D}_n$, conditioning on $\hat\theta(\mathcal{D}_n)$ reveals information about the calibration samples $(X_1,Y_1),\ldots,(X_n,Y_n)$, but not about the test point $(X_{n+1},Y_{n+1})$. Consequently, the conditional score sequence
    $$
    s(X_1,Y_1;\hat\theta),\ldots,s(X_n,Y_n;\hat\theta),
    s(X_{n+1},Y_{n+1};\hat\theta)
    \mid \hat\theta=\hat\theta(\mathcal{D}_n)
    $$
is generally no longer exchangeable. The loss of exchangeability arises from the asymmetric role of the test point relative to the calibration samples, which is precisely the type of nonexchangeability studied in~\cite{barber2023conformal}.

Following the argument of Theorem~2 in~\cite{barber2023conformal}, the coverage probability can be lower bounded by the nominal conformal level minus a correction term that measures the deviation from exchangeability. In particular, for the swapped sequence $\mathcal{D}_{n+1}^{(i)}$ defined in the main text, we have
    $$
    \mathbb{P}\left\{
    Y_{n+1} \in \mathcal{C}(X_{n+1};\mathcal{D}_n,\hat\theta(\mathcal{D}_n))
    \mid \mathcal{D}_n
    \right\}
    \geq
    1-\alpha
    -
    d_{\mathrm{TV}}\left(
    \mathcal{D}_{n+1},
    \mathcal{D}_{n+1}^{(i)}
    \mid \mathcal{D}_n
    \right).
    $$
Here, the total variation term quantifies the conditional discrepancy between the original augmented sample and the sequence obtained by swapping the test point with a calibration point. When the score sequence is exchangeable, this distance is zero and the usual conformal guarantee is recovered.

Taking expectations with respect to $\mathcal{D}_n$ on both sides yields
    $$
    \mathbb{P}\left\{
    Y_{n+1} \in \mathcal{C}(X_{n+1};\mathcal{D}_n,\hat\theta(\mathcal{D}_n))
    \right\}
    \geq
    1-\alpha
    -
    \mathbb{E}\left[
    d_{\mathrm{TV}}\left(
    \mathcal{D}_{n+1},
    \mathcal{D}_{n+1}^{(i)}
    \mid \mathcal{D}_n
    \right)
    \right].
    $$
This completes the proof.
\end{proof}

\subsection{Proof for Theorem~\ref{thm:re-calibrated-validity}}

\begin{proof}
Conditional on the learning data $\mathcal{D}_n$, the learned parameter $ \hat\theta(\mathcal{D}_n) $ is fixed. By assumption, the recalibration dataset $ \mathcal{D}_m=\{(X_j,Y_j)\}_{j=1}^m $ and the test point $(X_{m+1},Y_{m+1})$ are exchangeable with $\mathcal{D}_n$. Therefore, conditional on $\mathcal{D}_n$, the augmented score sequence
    $$
    s(X_1,Y_1;\theta),\ldots,
    s(X_m,Y_m;\theta),
    s(X_{m+1},Y_{m+1};\theta) \mid \theta = \hat\theta(\mathcal{D}_n)
    $$
is exchangeable. Using Lemma~\ref{thm:validity}, there is:
    $$
    \mathbb{P}
    \left\{ Y_{m+1} \in \mathcal{C}(X_{m+1}; \mathcal{D}_m, \hat \theta(\mathcal{D}_n)) \mid \hat \theta(\mathcal{D}_n)\right\} \geq 1- \alpha.
    $$
Taking expectation with respect to $\mathcal{D}_n$ yields
    $$
    \mathbb{P}
    \left\{
    Y_{m+1}
    \in
    \mathcal{C}(X_{m+1};\mathcal{D}_m,\hat\theta(\mathcal{D}_n))
    \right\}
    \geq
    1-\alpha.
    $$
\end{proof}

\subsection{Proof for Theorem~\ref{thm:gap}}
\label{app:gap}

\begin{proof}[Proof of Theorem~\ref{thm:gap}]
We begin by defining the calibration error and the parameter error in \eqref{eq:cal-error} and \eqref{eq:para-error}, respectively:
    \begin{align}
        \Delta(\mathcal{D}_m) & =  \mathcal{L}\left( \hat\theta(\mathcal{D}_n), R(\mathcal{D}_m;\hat \theta(\mathcal{D}_n)) \right) -
        \mathcal{L}\left( \hat\theta(\mathcal{D}_n), R^\star(\hat \theta(\mathcal{D}_n)) \right),
        \label{eq:cal-error}
        \\
        \Delta(\mathcal{D}_n) & =  \mathcal{L}\left( \hat\theta(\mathcal{D}_n), R^\star(\hat \theta(\mathcal{D}_n)) \right)
        - \mathcal{L}\Big( \theta^\star, R^\star(\theta^\star) \Big).
        \label{eq:para-error}
    \end{align}
Then, the suboptimality gap is upper bounded by the sum of two errors:
    $$
    \Delta = \Delta(\mathcal{D}_m) + \Delta(\mathcal{D}_n) \leq |\Delta(\mathcal{D}_m)| + \Delta(\mathcal{D}_n).
    $$
We proceed with analyzing the bound by expanding the two terms respectively.

    \paragraph{Calibration error $\Delta(\mathcal{D}_m)$}
For any $\theta \in \Theta$, note that the data-driven conformal radius $R(\mathcal{D}_m; \theta)$ is the empirical counterpart of the population quantile $R^\star(\theta)$. Denote the cumulative distribution function (CDF) that they each represent by $\hat F_m: \mathbb{R} \to [0, 1]$ and $F^\star: \mathbb{R} \to [0, 1]$, respectively. Using the Dvoretzky–Kiefer–Wolfowitz (DKW) inequality~\citep{dvoretzky1956asymptotic}, there is:
    $$
    \mathbb{P}\left\{ \| \hat F_m - F^\star \|_\infty \ge \epsilon \right\} \leq 2e^{-2 m \epsilon^2} \Rightarrow \mathbb{P}\left\{ \| \hat F_m - F^\star \|_\infty \le \sqrt{\frac{\log\left(2 / \delta \right)}{2m}} \right\} \ge 1 - \delta.
    $$
Using $(b)$ in Assumption~\ref{ass:gap}, we know that $F^\star$ has a derivative at least $\kappa$. Since quantile is the inverse function of CDFs, there is a probability greater than $1-\delta$:
    $$
    | R(\mathcal{D}_m; \theta) - R^\star(\theta) | \leq \frac{1}{\kappa} \| \hat F_m - F^\star \|_\infty \leq \frac{1}{\kappa} \sqrt{\frac{\log\left(2 / \delta \right)}{2m}}.
    $$
By using the Lipschitz condition $(c)$ in Assumption~\ref{ass:gap} and the arbitrariness of $\theta$, there is:
    $$
    |\Delta(\mathcal{D}_m)| \leq L_r \cdot | R(\mathcal{D}_m; \hat\theta(\mathcal{D}_n)) - R^\star(\hat\theta(\mathcal{D}_n)) | \leq \frac{L_r}{\kappa} \sqrt{\frac{\log\left(2 / \delta \right)}{2m}}.
    $$

    \paragraph{Parameter error $\Delta(\mathcal{D}_n)$}

The parameter error $\Delta(\mathcal{D}_n)$ can be further decomposed into three error terms:
    $$
    \Delta(\mathcal{D}_n) = \Delta_1 + \Delta_2 + \Delta_3 \leq |\Delta_1| + \Delta_2 + |\Delta_3|,
    $$
where
    \begin{align*}
        \Delta_1
        & = \mathcal{L}\left( \hat\theta(\mathcal{D}_n), R^\star(\hat \theta(\mathcal{D}_n)) \right)
        - \mathcal{L}\left( \hat\theta(\mathcal{D}_n), R(\mathcal{D}_n ;
        \hat \theta(\mathcal{D}_n)) \right), \\
        \Delta_2
        & = \mathcal{L}\left( \hat\theta(\mathcal{D}_n), R(\mathcal{D}_n ;
        \hat \theta(\mathcal{D}_n)) \right)
        - \mathcal{L}\Big( \theta^\star, R(\mathcal{D}_n ; \theta^\star) \Big), \\
        \Delta_3
        & = \mathcal{L}\Big( \theta^\star, R(\mathcal{D}_n ; \theta^\star) \Big)
        - \mathcal{L}\Big( \theta^\star, R^\star(\theta^\star) \Big).
    \end{align*}
The second error term $\Delta_2$ can be bounded using condition ($d$) of Assumption~\ref{ass:gap}:
    \begin{align*}
    \Delta_2 \leq
    \mathcal{L}\left( \hat\theta(\mathcal{D}_n), R(\mathcal{D}_n;\hat \theta(\mathcal{D}_n)) \right) - \inf_{\theta \in \Theta} \mathcal{L}\left( \theta, R(\mathcal{D}_n;\theta) \right)  \leq \epsilon_{\rm opt}.
    \end{align*}
Using condition $(c)$ of Assumption~\ref{ass:gap}, the first and third error terms share the same upper bound:
    \begin{align*}
        |\Delta_1|
        & \leq L_r \cdot \left| R^\star(\hat \theta(\mathcal{D}_n)) - R(\mathcal{D}_n; \hat \theta(\mathcal{D}_n)) \right| \leq L_r \cdot  \sup_{\theta \in \Theta} \left| R^\star(\theta) - R(\mathcal{D}_n; \theta) \right| \\
        |\Delta_3| & \leq L_r \cdot \left| R^\star(\theta^\star) - R(\mathcal{D}_n; \theta^\star) \right| \leq L_r \cdot \sup_{\theta \in \Theta} \left| R^\star( \theta) - R(\mathcal{D}_n; \theta) \right|.
    \end{align*}
For any $\theta$, the conformal radius $R(\mathcal{D}_n; \theta)$ is an empirical approximation to the population quantile $R^\star(\theta)$ using $n$ samples. We redefine the cumulative distribution function (CDF) that they each represent by $\hat F_n(\cdot;\theta): \mathbb{R} \to [0, 1]$ and $F^\star(\cdot;\theta): \mathbb{R} \to [0, 1]$, respectively. Combined with the condition $(a)$ of Assumption~\ref{ass:gap}, using the uniform convergence of VC classes~\citep{vapnik2015uniform}, there is:
    $$
    \mathbb{P}
    \left\{
    \sup_{\theta \in \Theta,\ r \in \mathbb{R}}
    \left| \hat F_n(r;\theta)-F^\star(r;\theta) \right|
    \le \epsilon
    \right\}
    \ge
    1-
    8
    \left(\frac{en}{V}\right)^V
    \exp\left(-\frac{n\epsilon^2}{32}\right).
    $$
By reorganizing the terms, we conclude that with probability greater than $1-\delta$:
    $$
    \sup_{\theta \in \Theta}
    \left\| \hat F_n(\cdot;\theta)-F^\star(\cdot;\theta) \right\|_\infty
    \le
    \sqrt{
    \frac{
    32\left[
    V\log(en/V)+\log(8/\delta)
    \right]
    }{n}
    }.
    $$
Again, using $(b)$ in Assumption~\ref{ass:gap}, we obtain that with probability $1-\delta$:
    $$
    \sup_{\theta \in \Theta}
    \left|
     R(\mathcal{D}_n;\theta)-R^\star(\theta) \right|
    \le
    \frac{1}{\kappa}
    \sqrt{
    \frac{
    32\left[
    V\log(en/V)+\log(8/\delta)
    \right]
    }{n}
    }.
    $$
Therefore, combining the three error terms, there is with probability greater than $1-\delta$:
    $$
    \Delta(\mathcal{D}_n) \leq \frac{2 L_r}{\kappa}
    \sqrt{
    \frac{
    32\left[
    V\log(en/V)+\log(8/\delta)
    \right]
    }{n}
    } + \epsilon_{\rm opt}.
    $$

    \paragraph{Combining $\Delta(\mathcal{D}_m)$ and $\Delta(\mathcal{D}_n)$}
Combining the calibration and parameter errors and using a union bound adjustment by setting $\delta$ to $\delta/2$ in the previous terms, there is, with probability greater than $1-\delta$:
    $$
    \Delta \leq
    \frac{L_r}{\kappa} \sqrt{\frac{\log\left(4 / \delta \right)}{2m}} +
    \frac{2 L_r}{\kappa}
    \sqrt{
    \frac{
    32\left[
    V\log(en/V)+\log(16/\delta)
    \right]
    }{n}
    } + \epsilon_{\rm opt}.
    $$
Using the big-$\mathcal{O}_\mathbb{P}$ notation~\citep{van2000asymptotic}, we conclude that:
    $$
    \Delta \leq \mathcal{O}_\mathbb{P}\left( \sqrt{\frac{1}{m}} + \sqrt{\frac{\log n}{n}}\right) + \epsilon_{\rm opt}.
    $$

\end{proof}

\section{Additional Experimental Details and Results}
\label{app:experiments}

\subsection{Synthetic Experimental Details}
\label{app:exp-details}

\paragraph{Data splits and evaluation.}
For the synthetic experiments, we use a fixed four-way split: 1,200 samples for fitting the predictor $\hat f$, a learning set $\mathcal D_n$ of 300 samples for learning the set geometry $\theta$, a calibration set $\mathcal D_m$ of 300 samples for determining the final conformal radius, and a held-out test set of 1,000 samples used only for evaluation. Results are averaged over five random seeds, each corresponding to an independent draw of the full dataset. Baselines that do not require a separate learning split (Bonferroni, conformal box/ball, and PCP) use the combined pool $\mathcal D_n\cup\mathcal D_m$ of 600 samples for calibration, ensuring a fair comparison. We set the nominal miscoverage level to $\alpha=0.10$ (corresponding to a target coverage of $1-\alpha=0.90$) and use $K=4$ half-planes for all polyhedral sets.

\paragraph{Synthetic data generation and predictor.}
For the synthetic tasks, covariates $X\in\mathbb R^4$ are drawn from $\mathcal N(0,I_4)$, and outcomes $Y\in\mathbb R^2$ follow a heteroscedastic nonlinear model:
\[
    Y = \mu(X) + \varepsilon(X),
\]
where $\mu_1(X)=0.75X_1+0.35\sin(X_2)-0.20X_3$, $\mu_2(X)=-0.45X_1+0.55X_2^2+0.30X_4$, and $\varepsilon(X)$ is drawn from a zero-mean Gaussian with a covariate-dependent scale factor $0.80+0.45\,\sigma(X_1)$ (where $\sigma$ is the sigmoid function) applied to a fixed anisotropic root-covariance matrix. An additional 6\% heavy-tail contamination is added to each draw. This design produces residuals that are both anisotropic and heteroscedastic, so that the set shape has a meaningful effect on coverage and downstream cost.

We fit the predictor $\hat f$ by ridge regression on a nonlinear feature map $\phi(x)=[1,\,x^\top,\,x_1^2,\,x_2^2,\,\sin(x_1),\,\sin(x_2)]^\top\in\mathbb R^9$, with ridge penalty $\lambda=10^{-3}$. Residuals $\epsilon_i=Y_i-\hat f(X_i)$ on the learning and calibration splits serve as the input to all conformal methods.

\paragraph{Synthetic objectives.}
We evaluate three two-dimensional robust decision tasks. The decision variable $z\in\Delta_2$ lives on the probability simplex, i.e.\ $z_1+z_2=1$, $z_j\ge0$.

\noindent\emph{Linear objective.}
\[
    g(z,u)=(c+u)^\top z,\qquad c=(-0.08,\,0).
\]
The constant $c_1=-0.08$ gives the first action a slight structural advantage; the sign of the realized outcome determines whether this advantage holds, making set orientation consequential.

\noindent\emph{Quadratic-regularized objective.}
\[
    g(z,u)=(c+u)^\top z+\beta(z_1-z_2)^2,\qquad c=(-0.08,\,0),\quad\beta=0.08.
\]
The quadratic term penalizes unequal allocation, introducing a smoothly curved Pareto frontier between the two actions.

\noindent\emph{Newsvendor objective.}
\[
    g(z,u)
    =
    \sum_{j=1}^2
    \bigl[
        c_o\max(z_j-u_j,0)
        +
        c_u\max(u_j-z_j,0)
    \bigr],
    \qquad c_o=0.3,\quad c_u=0.7.
\]
Here $z\in\Delta_2$ is the order quantity and $u$ is the realized demand. Overage ($c_o=0.3$) and underage ($c_u=0.7$) costs are asymmetric. The loss is piecewise linear and nonsmooth; its worst-case value depends on which tail of the residual distribution the uncertainty set exposes, making the learned set shape particularly consequential.

\subsection{Real Data Experimental Details}

\paragraph{Data splits.}
We use the UCI Energy Efficiency dataset~\citep{energy_efficiency_242}, also used by~\citet{pmlr-v206-wang23n} for conformal prediction evaluation. We partition the data without replacement into 320 training, 160 learning, 144 recalibration, and 144 test observations for each seed. The four subsets are mutually disjoint within each run, while the five seeds correspond to independently repeated random partitions of the same dataset.

\paragraph{Evaluation and objectives.}
Unlike prior evaluations of this dataset that report only coverage and set volume, we measure downstream worst-case cost and regret. The covariates $X\in\mathbb R^8$ encode eight building-design attributes (relative compactness, surface area, wall area, roof area, height, orientation, glazing area, glazing-area distribution), and $Y=(Y_1,Y_2)$ are heating and cooling loads (kWh/m$^2$). We pair the dataset with the linear objective $g(z,u)=(c+u)^\top z$, $c=(-0.08,0)$, where $z\in\Delta_2$ allocates a unit budget between heating ($z_1$) and cooling ($z_2$) control. We fit $\hat f$ by the same ridge feature regression as in the synthetic experiments.

\subsection{Baseline Implementation Details}

We compare against decision-agnostic conformal baselines, including Bonferroni boxes~\citep{dunn1961multiple}, conformal boxes~\citep{vovk2005algorithmic}, and conformal balls~\citep{pmlr-v206-wang23n}. We also include decision-aware baselines based on F-CROMS~\citep{bao2025optimal} and PICNN scores~\citep{yeh2024end}. For ablations, ``Polyhedron'' calibrates a polyhedral set without using the downstream loss, ``Polyhedron (min size)'' learns the polyhedral set by minimizing volume, and ``Ours without recalibration'' learns the decision-aware set using the combined split $\mathcal D_n\cup\mathcal D_m$ without an independent post-selection calibration step. All methods are evaluated over multiple random seeds using the same train--learn--calibrate--test splits whenever applicable.

\paragraph{Ours.}
Our method is implemented in two forms. For structured polyhedral templates, we use a CCG-based grid-search implementation. For more scalable learning, we use the stochastic pinball-loss surrogate described in Section~\ref{sec:comp-methods} and Appendix~\ref{app:pinball-quantile}.

\paragraph{F-CROMS.}
F-CROMS~\citep{bao2025optimal} selects a conformal prediction set from a finite library of candidate sets, each induced by a different nonconformity score.  In the original paper the library consists of model-induced scores; here we instantiate the library with randomly generated halfspace cuts to give the method a fair chance to discover polyhedral geometries similar to ours. $s_k(x,y) = w_k^\top(y-\hat f(x))$. For each $k$ we calibrate the threshold on the calibration split via split conformal prediction, obtaining a halfspace-shaped prediction set. F-CROMS then picks the candidate $k^\star$ whose set minimizes the empirical worst-case robust objective on the calibration data. We evaluate two library sizes: F-CROMS (10) with 10 candidates and F-CROMS (20) with 20. This implementation is faithful to the original method's spirit: it can only select among a discrete pool of pre-specified directions and cannot optimize the halfspace orientation continuously.  Our method, by contrast, jointly optimizes the halfspace parameters $W$ and $b$ over a continuous search space, which strictly subsumes any finite candidate library and yields a smaller decision risk whenever the optimal geometry is not representable by a randomly drawn cut.

\paragraph{Yeh et al.}
\citet{yeh2024end} train the uncertainty set end-to-end by parameterizing the nonconformity score via a partially input-convex neural network (PICNN), which guarantees that the sublevel set $\{y : s(x,y;\theta)\le\tau\}$ is convex in $y$ for every $x$. Gradients with respect to $\theta$ flow through the adversarial argmax via KKT conditions and implicit differentiation, enabling joint learning of set shape and size. We implement this baseline using the authors' PICNN score class with a two-hidden-layer architecture (width 64, ReLU activations) and optimize with Adam (step size $10^{-2}$, 200 epochs).  The conformal radius is then calibrated on the same held-out split as ours so that the final coverage guarantee is comparable. This design is sensible when $g(z,\cdot)$ is differentiable and the gradient signal can propagate through the PICNN; it produces general convex sets and is thus potentially more expressive than our polyhedral parameterization.

\subsection{Comparison of the Pinball-Loss Approximation and Exact MIQCP}
\label{app:pinball-comparison}

We additionally compare the exact MIQCP formulation with its pinball-loss approximation as the learning-set size $n=|\mathcal D_n|$ increases. This experiment isolates scalability with respect to sample size at a fixed outcome dimension. For each value of $n$, we run both formulations using the same five random seeds and report their average runtimes and worst-case objectives. The performance gap is the percentage increase in the worst-case objective of the pinball solution relative to the solution returned by the MIQCP formulation. Both solutions undergo the same independent conformal recalibration before evaluation.

The MIQCP solver certified global optimality within a relative tolerance of $10^{-4}$ in 26 of the 30 runs. For the remaining runs, we use the best feasible MIQCP solution returned within the computational budget. The number of certified runs is therefore reported separately in Table~\ref{tab:pinball-comparison}.

\begin{table}[t]
    \centering
\caption{Comparison of the exact MIQCP formulation and the pinball-loss approximation. Runtimes are averaged over five seeds. The time ratio is MIQCP runtime divided by pinball runtime, so a value greater than one indicates that the pinball approximation is faster.}
    \label{tab:pinball-comparison}
    \small
    \begin{tabular}{rccccc}
        \toprule
        $n$ & Certified runs & MIQCP time (s) & Pinball time (s)
        & Time ratio & Performance gap \\
        \midrule
         20 & $5/5$ &  0.11 & 1.73 & $0.06\times$ & 1.0\% \\
         40 & $5/5$ &  0.14 & 1.74 & $0.08\times$ & 9.3\% \\
         80 & $5/5$ &  0.96 & 2.13 & $0.45\times$ & 7.3\% \\
        160 & $5/5$ &  4.08 & 2.13 & $1.92\times$ & 7.6\% \\
        300 & $4/5$ & 28.90 & 2.07 & $13.96\times$ & 4.3\% \\
        600 & $2/5$ & 49.90 & 2.16 & $23.10\times$ & 4.6\% \\
        \bottomrule
    \end{tabular}
\end{table}

The exact MIQCP is faster for very small learning sets but becomes substantially more expensive as $n$ increases. Its average runtime grows from $0.11$ seconds at $n=20$ to $49.90$ seconds at $n=600$, whereas the pinball runtime remains stable between $1.73$ and $2.16$ seconds. At $n=300$ and $n=600$, the pinball approximation is approximately $14$ and $23$ times faster, respectively. Meanwhile, the increase in the worst-case objective remains below $10\%$ in all settings and falls below $5\%$ for the two largest instances. These results indicate that the exact formulation remains practical for small instances, while the pinball approximation offers substantially more stable computation with a modest loss in objective performance.

\subsection{Additional Results}

Tables~\ref{tab:combined_main_results} and~\ref{tab:energy_efficiency_linear_auto} provide additional results. Across the three synthetic objectives, the proposed recalibrated method maintains empirical coverage close to the nominal $90\%$ level while achieving the best or near-best worst-case cost. In the linear and quadratic-regularized objectives, our method substantially improves worst-case cost relative to standard conformal boxes, conformal balls, PCP, and fixed polyhedral baselines. For the newsvendor task, conformal $\ell_2$ attains the lowest worst-case cost, but our method remains competitive while using a decision-aware learned geometry. These results further show that smaller volume alone does not necessarily imply better downstream performance: PCP and the minimum-size polyhedron often produce compact sets but can incur larger worst-case costs.

\begin{table}[!t]
\centering
\scriptsize
\caption{Comparison across synthetic objectives. Mean\,$\pm$\,std over five seeds; $n=m=300$, $K=4$, $1{-}\alpha=0.90$. WC cost and regret are in units of $\times 10^{-1}$; time is total wall-clock seconds per run.}
\label{tab:combined_main_results}
\setlength{\tabcolsep}{4pt}
\resizebox{\textwidth}{!}{%
\begin{tabular}{llccccc}
\toprule
\textbf{Task}
& \textbf{Method}
& \textbf{Coverage}
& \textbf{Volume}
& \textbf{WC cost}
& \textbf{Regret}
& \textbf{Time (s)} \\
\midrule

\multirow{11}{*}{\parbox{2.3cm}{Linear objective}}
& Bonferroni box & $0.913\pm0.021$ & $0.84\pm0.15$ & $1.35\pm0.26$ & $0.26\pm0.16$ & $<0.01$ \\
& Conformal ($\ell_\infty$) & $0.901\pm0.021$ & $1.10\pm0.24$ & $1.64\pm0.51$ & $0.11\pm0.02$ & $<0.01$ \\
& Conformal ($\ell_2$) & $0.896\pm0.025$ & $1.03\pm0.12$ & $1.97\pm0.21$ & $0.30\pm0.06$ & $<0.01$ \\
& PCP & $0.915\pm0.021$ & $0.68\pm0.07$ & $1.83\pm0.40$ & $0.33\pm0.22$ & $<0.01$ \\
\cmidrule(lr){2-7}
& F-CROMS (10) & $0.897\pm0.023$ & $0.86\pm0.10$ & $1.35\pm0.27$ & $0.27\pm0.15$ & $0.13$ \\
& F-CROMS (20) & $0.899\pm0.024$ & $0.85\pm0.08$ & $1.31\pm0.24$ & $0.30\pm0.16$ & $0.25$ \\
& Yeh et al. & $0.899\pm0.023$ & $1.10\pm0.08$ & $2.04\pm0.45$ & $0.24\pm0.06$ & $4.6$ \\
& Polyhedron & $0.906\pm0.022$ & $0.69\pm0.13$ & $2.23\pm0.88$ & $0.35\pm0.08$ & $<0.01$ \\
& Polyhedron (min size) & $0.909\pm0.027$ & $0.61\pm0.09$ & $2.44\pm0.73$ & $0.29\pm0.11$ & $<0.01$ \\
\cmidrule(lr){2-7}
& Ours (w/o recal) & $0.901\pm0.019$ & $0.78\pm0.15$ & $\mathbf{1.15}\pm0.34$ & $0.27\pm0.17$ & $1.6$ \\
& \textbf{Ours} & $0.902\pm0.021$ & $0.78\pm0.04$ & $\underline{1.19}\pm0.44$ & $0.29\pm0.24$ & $1.6$ \\

\midrule

\multirow{11}{*}{\parbox{2.3cm}{Quadratic-regularized objective}}
& Bonferroni box & $0.913\pm0.021$ & $0.84\pm0.15$ & $2.10\pm0.27$ & $0.24\pm0.14$ & $<0.01$ \\
& Conformal ($\ell_\infty$) & $0.901\pm0.021$ & $1.10\pm0.24$ & $2.39\pm0.51$ & $0.12\pm0.03$ & $<0.01$ \\
& Conformal ($\ell_2$) & $0.896\pm0.025$ & $1.03\pm0.12$ & $2.55\pm0.21$ & $0.29\pm0.04$ & $<0.01$ \\
& PCP & $0.915\pm0.021$ & $0.68\pm0.07$ & $2.51\pm0.42$ & $0.31\pm0.21$ & $<0.01$ \\
\cmidrule(lr){2-7}
& F-CROMS (10) & $0.897\pm0.023$ & $0.86\pm0.10$ & $2.06\pm0.26$ & $0.26\pm0.14$ & $0.50$ \\
& F-CROMS (20) & $0.899\pm0.024$ & $0.85\pm0.08$ & $2.01\pm0.24$ & $0.28\pm0.15$ & $0.97$ \\
& Yeh et al. & $0.899\pm0.023$ & $1.10\pm0.08$ & $2.70\pm0.44$ & $0.24\pm0.03$ & $4.5$ \\
& Polyhedron & $0.906\pm0.022$ & $0.69\pm0.13$ & $2.87\pm0.83$ & $0.33\pm0.05$ & $<0.01$ \\
& Polyhedron (min size) & $0.909\pm0.027$ & $0.61\pm0.09$ & $3.09\pm0.71$ & $0.28\pm0.08$ & $<0.01$ \\
\cmidrule(lr){2-7}
& Ours (w/o recal) & $0.906\pm0.016$ & $0.81\pm0.09$ & $\underline{1.89}\pm0.35$ & $0.20\pm0.07$ & $5.9$ \\
& \textbf{Ours} & $0.901\pm0.019$ & $0.78\pm0.04$ & $\mathbf{1.80}\pm0.32$ & $0.20\pm0.08$ & $5.9$ \\

\midrule

\multirow{11}{*}{\parbox{2.3cm}{Newsvendor}}
& Bonferroni box & $0.911\pm0.018$ & $0.88\pm0.22$ & $8.76\pm0.51$ & $0.21\pm0.04$ & $<0.01$ \\
& Conformal ($\ell_\infty$) & $0.898\pm0.014$ & $1.08\pm0.20$ & $9.07\pm0.37$ & $0.19\pm0.03$ & $<0.01$ \\
& Conformal ($\ell_2$) & $0.895\pm0.016$ & $1.05\pm0.07$ & $\mathbf{8.03}\pm0.30$ & $0.15\pm0.05$ & $<0.01$ \\
& PCP & $0.910\pm0.027$ & $0.69\pm0.10$ & $10.63\pm0.82$ & $5.55\pm0.08$ & $<0.01$ \\
\cmidrule(lr){2-7}
& F-CROMS (10) & $0.901\pm0.019$ & $0.77\pm0.20$ & $8.30\pm0.52$ & $0.22\pm0.08$ & $0.49$ \\
& F-CROMS (20) & $0.903\pm0.023$ & $0.81\pm0.22$ & $8.30\pm0.51$ & $0.22\pm0.08$ & $0.96$ \\
& Yeh et al. & $0.895\pm0.019$ & $1.10\pm0.10$ & $8.68\pm0.28$ & $0.24\pm0.08$ & $6.0$ \\
& Polyhedron & $0.898\pm0.025$ & $0.73\pm0.18$ & $8.39\pm0.40$ & $0.24\pm0.04$ & $<0.01$ \\
& Polyhedron (min size) & $0.901\pm0.029$ & $0.59\pm0.12$ & $8.47\pm0.46$ & $0.26\pm0.09$ & $<0.01$ \\
\cmidrule(lr){2-7}
& Ours (w/o recal) & $0.892\pm0.025$ & $0.70\pm0.22$ & $\underline{8.05}\pm0.50$ & $0.20\pm0.09$ & $6.5$ \\
& \textbf{Ours} & $0.898\pm0.017$ & $0.73\pm0.19$ & $8.12\pm0.38$ & $0.21\pm0.12$ & $5.8$ \\
\bottomrule
\end{tabular}}
\end{table}

The real-data energy-efficiency experiment in Table~\ref{tab:energy_efficiency_linear_auto} shows a similar pattern. Our method achieves the lowest worst-case cost among all methods, while maintaining coverage near the target level within sampling variability. Compared with Ours (w/o recal), the recalibrated version has nearly identical regret and runtime, but provides the statistically valid two-stage construction used in the main method. Overall, these results support the conclusion that learning the geometry of the conformal set in a decision-aware manner improves robust downstream performance without sacrificing empirical coverage.

\begin{table}[!t]
\centering
\small
\caption{Energy Efficiency (real data). Mean\,$\pm$\,std over five seeds; $n=160$, $m=144$, $K=4$, $1{-}\alpha=0.90$. WC cost and regret are in units of $\times10^{-1}$; time is total wall-clock seconds per run.}
\label{tab:energy_efficiency_linear_auto}

\resizebox{\linewidth}{!}{%
\begin{tabular}{lccccc}
\toprule
\textbf{Method} & \textbf{Coverage} & \textbf{Volume} & \textbf{WC cost} ($\times10^{-1}$) & \textbf{Regret} ($\times10^{-1}$) & \textbf{Time (s)} \\
\midrule
Bonferroni box & $0.914\pm0.017$ & $0.72\pm0.08$ & $2.29\pm0.97$ & $0.43\pm0.12$ & $<0.01$ \\
Conformal ($\ell_\infty$) & $0.915\pm0.023$ & $0.88\pm0.23$ & $3.53\pm1.26$ & $0.32\pm0.03$ & $<0.01$ \\
Conformal ($\ell_2$) & $0.908\pm0.033$ & $0.84\pm0.12$ & $2.94\pm1.02$ & $0.74\pm0.05$ & $<0.01$ \\
\midrule
PCP & $0.906\pm0.027$ & $0.65\pm0.09$ & $3.41\pm0.84$ & $0.41\pm0.10$ & $<0.01$ \\
F-CROMS (10) & $0.892\pm0.033$ & $0.83\pm0.24$ & $2.01\pm0.92$ & $0.56\pm0.18$ & $0.13$ \\
F-CROMS (20) & $0.893\pm0.034$ & $0.85\pm0.24$ & $1.93\pm0.93$ & $0.54\pm0.17$ & $0.25$ \\
\midrule
Yeh et al. & $0.914\pm0.022$ & $0.89\pm0.21$ & $3.24\pm1.24$ & $0.40\pm0.06$ & $2.7$ \\
Polyhedron & $0.900\pm0.023$ & $0.66\pm0.11$ & $3.70\pm1.22$ & $0.84\pm0.10$ & $<0.01$ \\
Polyhedron (min size) & $0.929\pm0.020$ & $0.65\pm0.13$ & $3.06\pm1.34$ & $0.77\pm0.18$ & $<0.01$ \\
\midrule
Ours (w/o recal) & $0.897\pm0.036$ & $1.20\pm0.24$ & $\underline{1.80}\pm0.83$ & $0.48\pm0.15$ & $1.6$ \\
\textbf{Ours} & $0.896\pm0.024$ & $1.16\pm0.17$ & $\textbf{1.76}\pm0.88$ & $0.48\pm0.15$ & $1.6$ \\
\bottomrule
\end{tabular}}
\end{table}

\end{document}